\documentclass{article}
\usepackage[utf8]{inputenc}
\usepackage{textcomp}
\usepackage{amssymb} 
\usepackage{newunicodechar}
\usepackage{placeins}
\newunicodechar{−}{\ensuremath{-}}
\usepackage{amsmath}
\usepackage{enumitem}
\usepackage{amsthm}
\newtheorem{theorem}{Theorem}
\newtheorem{lemma}[theorem]{Lemma}
\newtheorem{proposition}[theorem]{Proposition}
\newtheorem{corollary}[theorem]{Corollary}

\PassOptionsToPackage{numbers,sort&compress}{natbib}
\usepackage[preprint]{neurips_2026}

\usepackage{graphicx}
\usepackage[utf8]{inputenc} 
\usepackage[T1]{fontenc}    
\usepackage{hyperref}       
\usepackage{url}            
\usepackage{booktabs}       
\usepackage{amsfonts}       
\usepackage{nicefrac}       
\usepackage{microtype}      

\usepackage[most]{tcolorbox}
\usepackage[table]{xcolor}
\definecolor{ourscolor}{RGB}{230,243,255}  
\definecolor{mydarkblue}{RGB}{0,80,160}

  \newtcolorbox{summarybox}{
    colback=ourscolor,
    colframe=mydarkblue,
    boxrule=1pt,
    arc=3pt,
    left=6pt,
    right=6pt,
    top=6pt,
    bottom=6pt,
  }

\newcommand{\eos}{EoS}

\newcommand{\vone}{\mathbf{v}_{1}}

\makeatletter
\newcommand\blfootnote[1]{%
  \begingroup
  \renewcommand\thefootnote{}%
  \renewcommand\@makefnmark{}%
  \renewcommand\@makefntext[1]{\noindent##1}%
  \footnotetext{#1}%
  \endgroup
}
\makeatother

\title{Edge of Stability Selectively Shapes Learning Across the Data Distribution}

\author{%
  Shauna Kwag* \\
  MIT \\
  \texttt{kwags@mit.edu} \\
  \And
  Anakha Ganesh* \\
  MIT \\
  \texttt{anakhag@mit.edu} \\
  \And
  Tomaso Poggio \\
  MIT \\
  \texttt{tp@ai.mit.edu} \\
  \And
  Pierfrancesco Beneventano \\
  MIT \\
  \texttt{pierb@mit.edu} \\
}

\begin{document}

\maketitle

\blfootnote{* Equal contribution.}

\begin{abstract}

Existing analyses of the edge of stability (EoS) treat it as a global property of optimization. We show that it is also selective: the stability constraint redistributes learning across subsets of the training distribution, amplifying progress on some groups while suppressing progress on others. Using a branching intervention that enters or exits the EoS regime from the same training state, we causally demonstrate this trade-off and identify two necessary conditions for a group to benefit. First, its aggregate gradient must align with the top Hessian eigenvector. We isolate this mechanism with a controlled perturbation that preserves distance but randomizes direction, destroying alignment and eliminating the advantage. Second, the group must sustain non-vanishing gradient magnitude over time. Under cross-entropy loss, gradient saturation decouples confidently classified groups, shifting the advantage to output-outliers, whose gradients persist. Together, these results show that EoS functions not only as a stability boundary, but as a mechanism governing the allocation of learning across the data distribution.
\end{abstract} 

\section{Introduction}

Deep neural networks exhibit strong sensitivity to optimizer and hyperparameters. Training choices such as learning rate, batch size, and optimizer affect which solution is found \citep{zhang2017understanding, keskar2017large, jastrzebski2018three}, unlike in the classical convex setting where these choices do not affect which minimum is reached. Understanding the mechanisms underlying this implicit bias is a key objective in the theory of deep learning.

One structural explanation comes from the \textit{edge of 
stability} (\eos{}) literature: under full-batch and large-batch 
gradient descent, the top Hessian eigenvalue self-stabilizes near 
the stability threshold that depends on the optimizer and 
hyperparameters \citep{xingwalk2018, jastrzebski2019relation, jastrzebski2020break-even, cohen2021gradient, cohen2023adaptive, andreyev2024edge, andreyev2026momentum, islamov2026non, kalra2026scalable, saether2026does}.
At this threshold, the optimizer operates at the boundary of 
discrete-time stability, constraining which regions of the loss 
landscape remain accessible during training.

While the \eos{} phenomenon is well established, far less is understood about its consequences. In particular, it remains unclear whether operating near the stability threshold provides any functional benefit, or how these stability constraints shape optimization across the data distribution. Prior work has primarily characterized \eos{} through curvature and optimization trajectories in parameter space, leaving open how these dynamics influence which examples are learned during training. We ask:
\begin{center}
\textit{What are the practical consequences of training at the edge of stability?}\\
\textit{Which subsets of the training distribution benefit from these dynamics, which do not, and why?}
\end{center}

We give a positive answer to the first question. Then, to study this allocation, we define four prototype groups from the geometry of $P_{X,Y}$ that vary in input typicality, label consistency, and boundary proximity, independent of the model or loss. 

\begin{summarybox}
    \textbf{Message:} We find that \eos{} induces a \textit{selective} learning regime: the stability constraint distributes optimization effort unevenly, amplifying subsets whose gradients persistently align with the top Hessian eigendirection while suppressing others. 
\end{summarybox}

This counters classical intuition: although the Descent Lemma treats the sharpest direction as a stability boundary that constrains learning, alignment with it is precisely what determines how learning is allocated across the distribution.

Our findings connect to a longstanding debate on whether low curvature improves generalization \citep{hochreiter1997flat, keskar2017large, foret2021sharpness, dinh2017sharp, morosini2026too}. \eos{} is the regime in which sharpness is actively constrained near the stability threshold, making it a natural setting to test what low curvature actually confers. Our results suggest the answer is not global: the functional benefit depends on which subset dominates the top Hessian eigendirection and shifts with the geometric composition of the training distribution. Flatness, in this view, is a directional property determined by data geometry rather than a scalar property of the solution. More broadly, our results take a step toward connecting two largely separate lines of work: \textit{implicit regularization} in parameter space and \textit{inductive bias} over the data distribution.

Our paper makes the following contributions:
\begin{itemize}[leftmargin=1em,itemsep=0.6em,topsep=0.15em,parsep=0pt]
\item \textbf{\eos{} is selective, not global} (\S\ref{sec:experiments}). Using a branching intervention that enters or exits \eos{} from a shared training trajectory, we find that the stability constraint is not a uniform bottleneck: it selectively benefits some subsets of the training distribution while suppressing others. The trade-off qualitatively replicates across different architectures and optimizers (Appendix \ref{app:robustness}). 

\item \textbf{Selectivity is governed by alignment $\times$ persistence} (\S\ref{sec:mechanism}). The advantage at \eos{} is captured by subsets whose aggregate gradient both aligns with the top Hessian eigenvector and remains non-vanishing throughout training. Two controlled counterfactuals isolate each factor: random-direction displacement removes alignment, and cross-entropy saturation removes persistence. In both cases the \eos{} advantage disappears or is transferred to the subsets that retain the missing factor. Both factors fall out of extending the self-stabilization framework \citep{damian2023selfstabilization} from the global loss to per-subset losses (Section~\ref{sec:prelim-selfstab}; full derivation in Appendix~\ref{app:theory}).

\item \textbf{Geometry shifts the beneficiary} (\S\ref{sec:generalize}). In our MLP experiments on CIFAR-10 (the standard setting in prior \eos{} work \citep{cohen2021gradient}), the subset that benefits from \eos{} is the one furthest from the class centroids in the input space. Varying the geometric composition of the training distribution continuously shifts which subset this is. Preliminary evidence suggests the resulting generalization behavior shifts accordingly, with improved adversarial robustness when geometrically distant examples lie near the decision boundary, and improved out-of-distribution generalization when they lie far from the training distribution.
\end{itemize}

\begin{figure}[h]
\centering
\begin{minipage}[t]{0.48\linewidth}
    \centering
    \includegraphics[width=\linewidth]{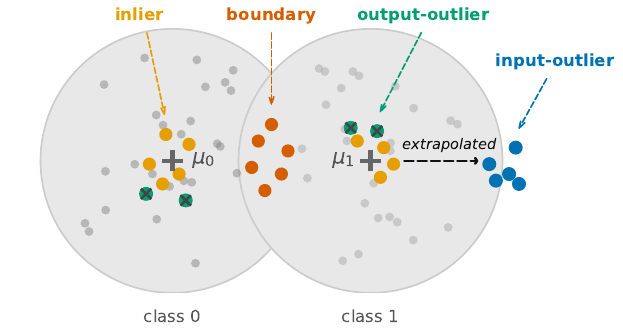}
    \caption{\textbf{Conceptual taxonomy of prototypes.} Data samples are categorized based on geometric proximity in input space relative to class-specific cluster centroids ($\mu_0$, $\mu_1$).}
    \label{fig:prototypes}
\end{minipage}
\hfill
\begin{minipage}[t]{0.48\linewidth}
    \centering
    \includegraphics[width=\linewidth]{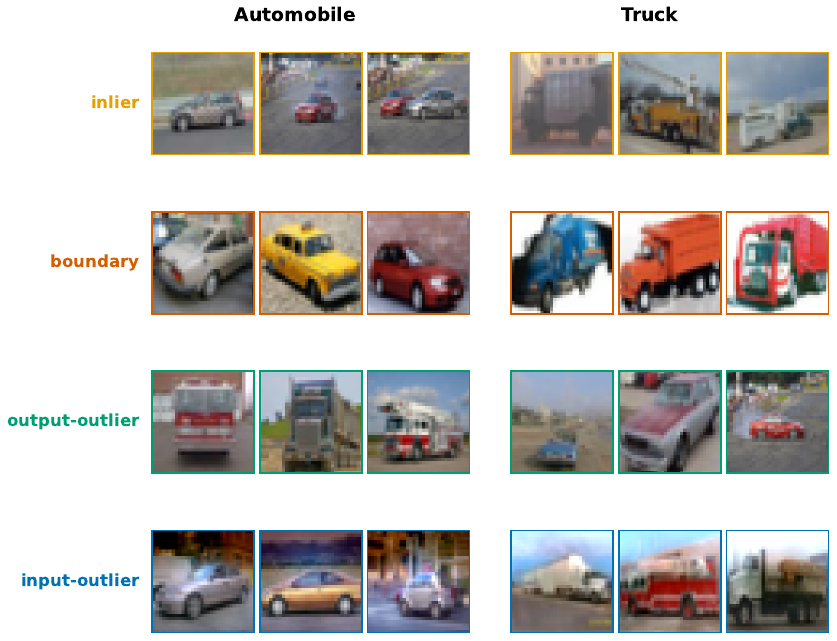}
    \caption{\textbf{Input-space visualization of prototype groups for CIFAR-10.} Three representative samples per class (automobile vs.\ truck) are shown for each group.}
    \label{fig:prototype_visualization}
\end{minipage}
\end{figure}

\section{Preliminaries}
\label{sec:prelim}

\subsection{Edge of stability}
Gradient descent in deep networks often operates at the \emph{edge of stability} (\eos{}), a regime in which training remains near the boundary between stable and unstable updates without diverging \citep{cohen2021gradient}. Consider full-batch gradient descent $\theta_{t+1} = \theta_t - \eta \nabla L(\theta_t)$ with Hessian $H_t = \nabla^2 L(\theta_t)$ and eigenpairs $(\lambda_i, v_i)$. A perturbation along $v_i$ is multiplied by $1 - \eta \lambda_i$ at each step, so discrete-time stability requires $\eta \lambda_i < 2$. \eos{} is the regime in which the top eigenvalue $\lambda_1$ (sharpness) saturates the bound:
\[
    \eta \lambda_1 \approx 2.
\]
The multiplier along the corresponding eigendirection $\vone$ is then near $-1$, producing sign-alternating oscillations that are linearly unstable yet remain bounded throughout training \citep{cohen2021gradient}.

\subsection{Self-stabilization}
\label{sec:prelim-selfstab}

The non-monotonic descent at \eos{} is explained by the \emph{self-stabilization} mechanism of Damian et al. \citep{damian2023selfstabilization}. Once $\lambda_1$ exceeds $2/\eta$, gradient descent develops a fast period-2 oscillation along $\vone$. Higher-order terms convert this oscillation into a slow corrective drift toward lower sharpness, so that the cycle-averaged dynamics remain near the active boundary $\{\lambda_1 \approx 2/\eta\}$. 

Under this cycle-averaged description, the \eos{} drift differs from ordinary gradient descent by an additional sharpness-reducing component,
\[
    -(\alpha/\beta)\nabla \lambda_1,
    \qquad
    \alpha = -\langle \nabla L, \nabla \lambda_1\rangle,
    \quad
    \beta = \|\nabla \lambda_1\|^2 .
\]
For a subset loss $\ell_k$, this extra drift contributes
\[
    -(\alpha/\beta)\langle \nabla \ell_k, \nabla \lambda_1\rangle
\]

We use this established framework to motivate our experimental design. The selector $\langle \nabla \ell_k, \nabla \lambda_1 \rangle$ depends on both a directional factor (alignment between $\nabla \ell_k$ and $\nabla \lambda_1$) and a magnitude factor (persistence of $\|\nabla \ell_k\|$ over training). These two components motivate complementary controlled interventions in Section~\ref{sec:mechanism}.

Empirically, we track the directional factor through alignment 
with the top Hessian eigendirection $\vone$, measuring how 
strongly each subset data subset couples to the unstable mode. Appendix~\ref{app:theory} derives the per-subset loss decomposition and shows that the measured $\cos^2\theta_k$ serves as an empirical proxy for the theoretical EoS selector $Q_k = \langle\nabla\ell_k, \nabla\lambda_1\rangle$ 
under the single-mode approximation. Together, this motivates our 
experiments: branching interventions test the effect of remaining 
at \eos{}, random-direction displacement isolates alignment, and 
cross-entropy saturation isolates gradient persistence.

\subsection{Prototype Taxonomy}
\label{sec:prototypes}
\paragraph{Definition.}
We partition the training distribution into four groups defined by the joint geometry of $P_{X,Y}$, independent of any trained model (Figure~\ref{fig:prototypes}). 
\emph{Inliers} are high-density points near the class centroid $\mu_c$ with correct labels. 
\emph{Boundary points} are in-distribution examples near the inter-class boundary, identified by high label ambiguity in their local neighborhood. 
\emph{Input-outliers} are geometrically atypical inputs, far from $\mu_c$ in input space, that retain correct labels. 
\emph{Output-outliers} are high-density inputs assigned an incorrect label. Representative examples from each group are shown in Figure~\ref{fig:prototype_visualization}. Inliers and boundary points are identified from the existing training data by ranking centroid distance and $k$-NN label ambiguity respectively, while input-outliers and output-outliers are synthetically constructed to isolate the effects of input-space atypicality and label inconsistency.

\paragraph{Construction.} 
We instantiate the taxonomy on a binary CIFAR-10 task (automobile vs.\ truck, $n = 10{,}000$) \citep{cifar10}. Inlier candidates are the $M = 3m$ points per class with smallest centroid distance $\|x_i - \mu_c\|$; boundary candidates are the $m$ points per class whose $k$-NN label composition ($k = 50$) is closest to uniform. From the inlier candidate pool we sample three disjoint subsets of size $m = 25$ per class: the first retains original inputs and labels (inliers); the second is assigned flipped labels $1 - c$ (output-outliers); the third is extrapolated by pushing each example away from the opposite class's centroid as $x_i \pm \alpha\, v_{\mathrm{diff}}$ (sign $+$ if $y_i = 1$, $-$ if $y_i = 0$), where $v_{\mathrm{diff}} = \mu_1 - \mu_0$ is the unnormalized centroid difference (input-outliers). We set $\alpha = 3$, chosen so input-outliers are by construction the most distant group from their class centroid; resulting pixel values may lie outside the valid input range, but we do not clip in order to preserve the displacement magnitude. Boundary points are drawn directly from the ambiguity pool. The final training set contains $n = 10{,}000$ examples, including 200 prototype-labeled points (50 per group, 25 per class) tracked throughout training.

\subsection{Metrics}
\paragraph{Per group loss $\ell_k$.}
For each prototype group $k \in \{\text{inlier, boundary, input-outlier, output-outlier}\}$ with index set $P_k$, we define $\ell_k = \frac{1}{|P_k|}\sum_{i \in P_k} \ell(f(x_i), y_i)$ as the average loss over the examples in group $k$. Tracking $\ell_k$ over training reveals the learning order across the groups and which groups are differentially affected by the stability constraint at EoS.

\paragraph{Directional coupling.}
Let $\nabla \ell_k$ denote the gradient of the loss restricted to prototype group~$k$. We measure the directional coupling of group~$k$ to the \eos{}-constrained mode by
\begin{equation}\label{eq:cos2}
    \cos^2\!\theta_k
    \;=\; \frac{(\nabla \ell_k \cdot {v}_1)^2}
               {\|\nabla \ell_k\|^2}
    \;\in\; [0,1],
\end{equation}
where $\vone$ is the top Hessian eigenvector (assumed unit norm). When $\cos^2\!\theta_k \approx 1$, the group gradient is nearly aligned with $\vone$, and when $\cos^2\!\theta_k \approx 0$, it is nearly orthogonal. This quantity measures how strongly a group’s gradient aligns with the direction constrained by \eos{} dynamics.

The link to learning comes from self-stabilization. At \eos{}, the oscillation–stabilization cycle produces net parameter movement primarily along $\vone$ \citep{damian2023selfstabilization}. Consequently, loss decreases predominantly for groups whose gradients are aligned with $\vone$, while groups with orthogonal gradients make limited progress (Figure~\ref{fig:fig_schematic}).

\begin{figure} [h]
    \centering    \includegraphics[width=1\linewidth]{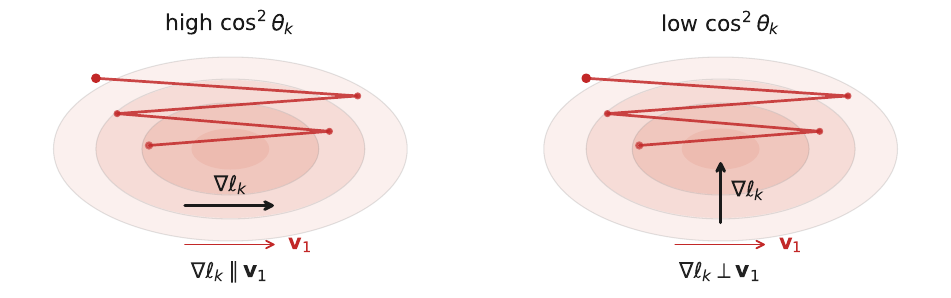}
    \caption{\textbf{Directional coupling at \eos{}.} The optimizer oscillates along $\vone$ (red zigzag). When a group's gradient $\nabla\ell_k$ aligns with $\vone$ (left), self-stabilization reduces loss for that group. When $\nabla\ell_k$ is orthogonal to $\vone$ (right), the group is decoupled from the oscillation and its loss does not benefit.}
    \label{fig:fig_schematic}
\end{figure}

\paragraph{Curvature influence.}
While $\cos^2\!\theta_k$ measures direction, it does not capture gradient magnitude. We report the squared projection \begin{equation}\label{eq:proj} (\nabla\ell_k \cdot {v}_1)^2 = \|\nabla\ell_k\|^2 \cdot \cos^2\!\theta_k, \end{equation} which quantifies a group's effective curvature influence along $\vone$. This follows from the quadratic form \[ \nabla \ell_k^\top H \nabla \ell_k = \sum_i \lambda_i (\nabla \ell_k \cdot v_i)^2, \] where the contribution of the top eigendirection is $\lambda_1 (\nabla \ell_k \cdot v_1)^2$. Since $\lambda_1$ is shared across groups at a given step, $(\nabla \ell_k \cdot v_1)^2$ ranks groups by their curvature influence in the dominant direction. The metric can decrease either through rotation away from $\vone$ or shrinking gradient magnitude. We isolate each effect in Section~\ref{sec:mechanism}.

\section{Selective Learning at the Edge of Stability}\label{sec:experiments}
\subsection{Setup and Intervention Design}\label{sec:setup}
We train a two-hidden-layer MLP (width 512, ReLU) with full-batch gradient descent on a binary CIFAR-10 task (automobile vs.\ truck, $n=10{,}000$) for 10{,}000 steps. Prototype groups are constructed as described in Section~\ref{sec:prototypes}. The default learning rate is $\eta = 0.01$ under mean square error (MSE). All quantities are plotted against effective time $t_{\mathrm{eff}} = \eta\, t$, which normalizes for step size.

To test whether the stability constraint causally affects subset-level learning, we branch each run from a shared trajectory at time $t^*$, defined as the onset of \eos{} (detected as the first time $\lambda_1$ reaches $2/\eta$). The \emph{baseline} branch continues at the original learning rate, remaining at \eos{}. The \emph{exit} branch reduces the learning rate by half ($\eta \to \eta/2$), increasing the stability threshold to $4/\eta$ and allowing for training to promptly exit the \eos{} regime. We denote $t^{**}$ as the time at which the exit branch reaches its new stability threshold $4/\eta$. The interval $[t^*, t^{**}]$ is the window over which the two branches occupy different stability regimes. Since architecture, data, and initialization are shared up to $t^*$, the branching intervention identifies the causal effect of continuing at the original learning rate versus dropping it at \eos{} onset. The intervention separates the branches into \eos{} and non-\eos{} regimes, so post-branch divergence in prototype-level loss provides evidence about the consequences of remaining at \eos{}.

All figures show medians across 5 seeds. Shading indicates 
interquartile range where shown. Full experimental setup details can be found in Appendix~\ref{app:details}.

\subsection{The Selective Trade-off}

Figure~\ref{fig:eos_tradeoff} shows the effect of the branching intervention under MSE training. After the exit branch leaves the \eos{} regime at $t^*$, prototype losses begin to diverge between the two runs. The divergence is group-specific: input-outlier and output-outlier loss decrease faster under the baseline ($\Delta\ell_k > 0$), while inlier and boundary loss decrease faster under the exit branch ($\Delta\ell_k < 0$). The stability constraint does not uniformly slow or accelerate learning across the data subsets; instead, it redistributes optimization, favoring some groups at the expense of others. 

The selective trade-off replicates across alternative architectures (CNN, ResNet), optimizers, and class pair (Appendix~\ref{app:robustness}).

\begin{figure}[h]
    \centering    \includegraphics[width=\linewidth]{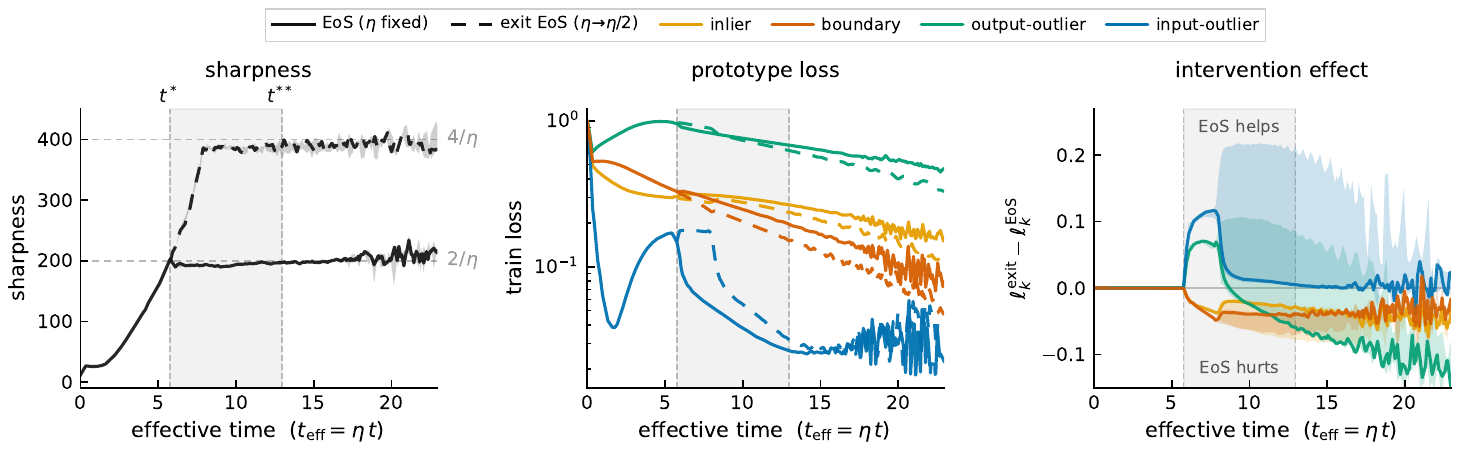}
    \caption{\textbf{EoS creates selective trade-offs across prototype groups.} Baseline run (solid) enters EoS at $t^{*}$, while the exit branch (dashed) subsequently leaves. \textbf{Left:} sharpness confirms the two branches occupy distinct stability regimes. \textbf{Middle:} prototype losses diverge post-branch, with input-outliers and output-outliers benefiting from EoS while boundary and inlier progress is suppressed. \textbf{Right:} the intervention effect $\Delta\ell_k = \ell_k^{\text{exit}} - \ell_k^{\text{EoS}}$ is shown where positive values indicate EoS achieves lower loss for that group.}    \label{fig:eos_tradeoff}
\end{figure}

\subsection{Alignment Resolves Under EoS, Persists Without It}
\label{sec:directional_alignment}

Figure~\ref{fig:alignment} shows directional alignment $\cos^2\!\theta_k$ for input-outliers and boundary points. During progressive sharpening, alignment for input-outliers increases rapidly, approaching 1 near the onset of \eos{}, while boundary alignment remains below $0.2$. As a result, input-outlier gradients dominate the top eigendirection of the Hessian, and $\vone$ aligns with them.

\begin{figure} [h]
    \centering
    \includegraphics[width=0.45\linewidth]{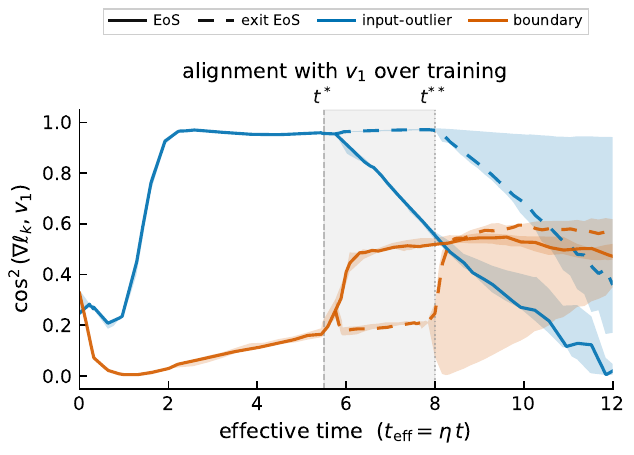}
    \caption{\textbf{Alignment dynamics under \eos{}.} Input-outlier $\cos^2\theta_k$ rises during progressive sharpening and dominates $\mathbf{v}_1$ at \eos{} onset. Under the baseline (solid), self-stabilization resolves this alignment and $\mathbf{v}_1$ rotates toward boundary points. Under the exit branch (dashed), alignment remains elevated until the new threshold is reached at $t^{**}$.}
    \label{fig:alignment}
\end{figure}

Once \eos{} is reached at $t^{*}$, input-outlier $\cos^2\!\theta_k$ declines under the baseline as self-stabilization reduces their loss and gradient magnitude. As their curvature contribution weakens, boundary alignment rises and $\vone$ rotates toward it. Under the exit branch, this transition does not occur: input-outlier alignment remains elevated through $t^{**}$, indicating that the resolution is driven specifically by \eos{} dynamics. This motivates identifying the properties that determine which group dominates $\vone$.

\section{Mechanism: Alignment and Gradient Persistence}
\label{sec:mechanism} 
Two properties are jointly necessary to capture the \eos{} advantage: directional alignment and gradient persistence. We isolate each experimentally.

\subsection{Directional alignment.} In the baseline construction, input-outliers are displaced along a shared direction $v_{\mathrm{diff}}$, yielding a directionally coherent group. To test whether this coherence drives their curvature dominance, we construct a counterfactual in which each input-outlier is displaced by the same distance but in a random direction orthogonal to $v_{\mathrm{diff}}$. The two conditions match in centroid distance, group size, and labels, differing only in directional structure (Appendix~\ref{app:coherent_random}).

Under coherent displacement along $v_{\mathrm{diff}}$, input-outlier's $\cos^2\!\theta_k$ and curvature influence $(\nabla\ell_k \cdot {v}_1)^2$ dominate (Figure~\ref{fig:coherent_incoherent}). The input-outliers benefit from \eos{} while the progress of other groups is suppressed. Under incoherent displacement, input-outlier alignment collapses, curvature influence falls by an order of magnitude, and the selective intervention effect dissolves. Geometric atypicality alone (large distance from its own class centroid) is not sufficient. The group must push the Hessian coherently along a shared direction to dominate $\vone$ and benefit at the edge of stability.

\begin{figure} [h]
    \centering
    \includegraphics[width=1\linewidth]{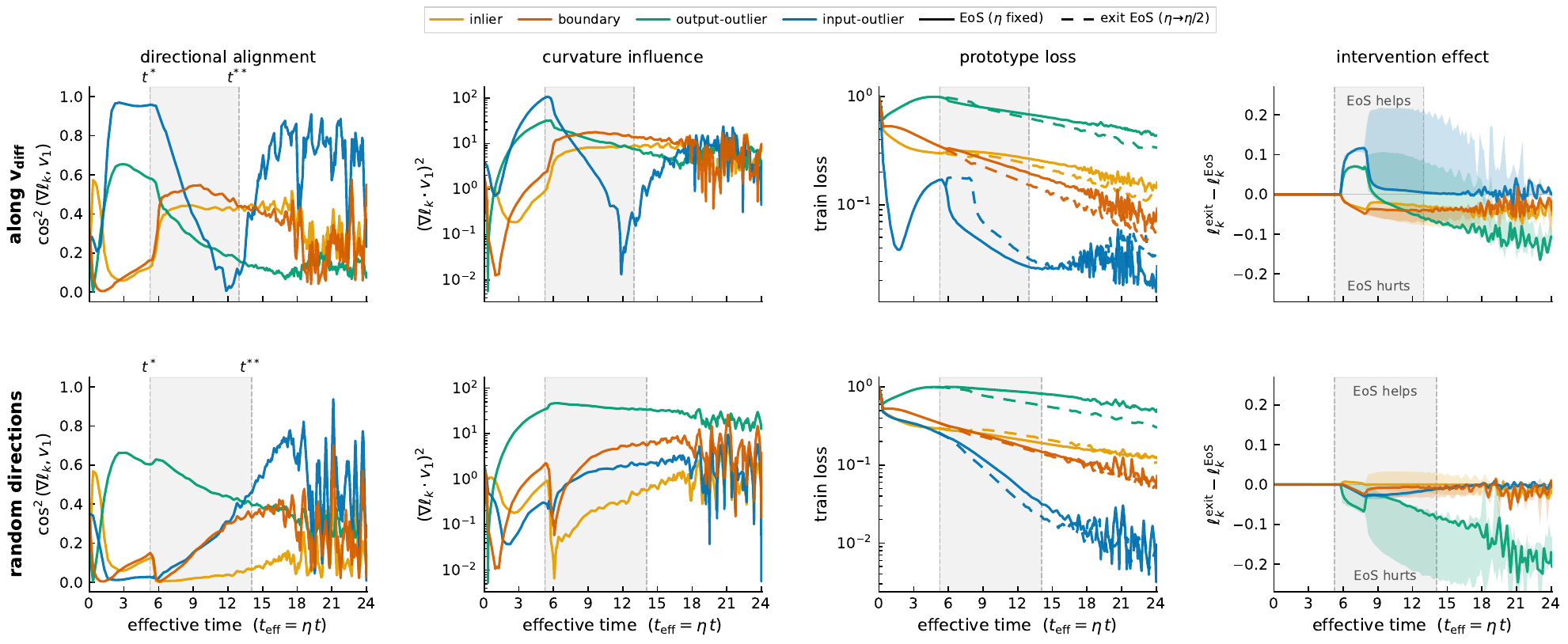}
    \caption{\textbf{Directional alignment is necessary for the selective \eos{} advantage.} Identical seeds and configurations; only the input-outlier displacement direction differs. \textbf{Top:} Coherent displacement along $v_{\mathrm{diff}}$ yields high alignment and curvature influence for input-outliers, which capture the \eos{} advantage. \textbf{Bottom:} Random orthogonal displacement at equal distance reduces alignment and curvature influence, largely eliminating the trade-off.}
    \label{fig:coherent_incoherent}
\end{figure}

\subsection{Gradient persistence.} Directional coherence determines \emph{which direction} a group pushes the Hessian, but the coupling also requires sustained gradient \emph{magnitude}. We test this by comparing MSE and CE training on identical data (Figure~\ref{fig:mse_ce}).

\begin{figure} [h]
    \centering
    \includegraphics[width=1\linewidth]{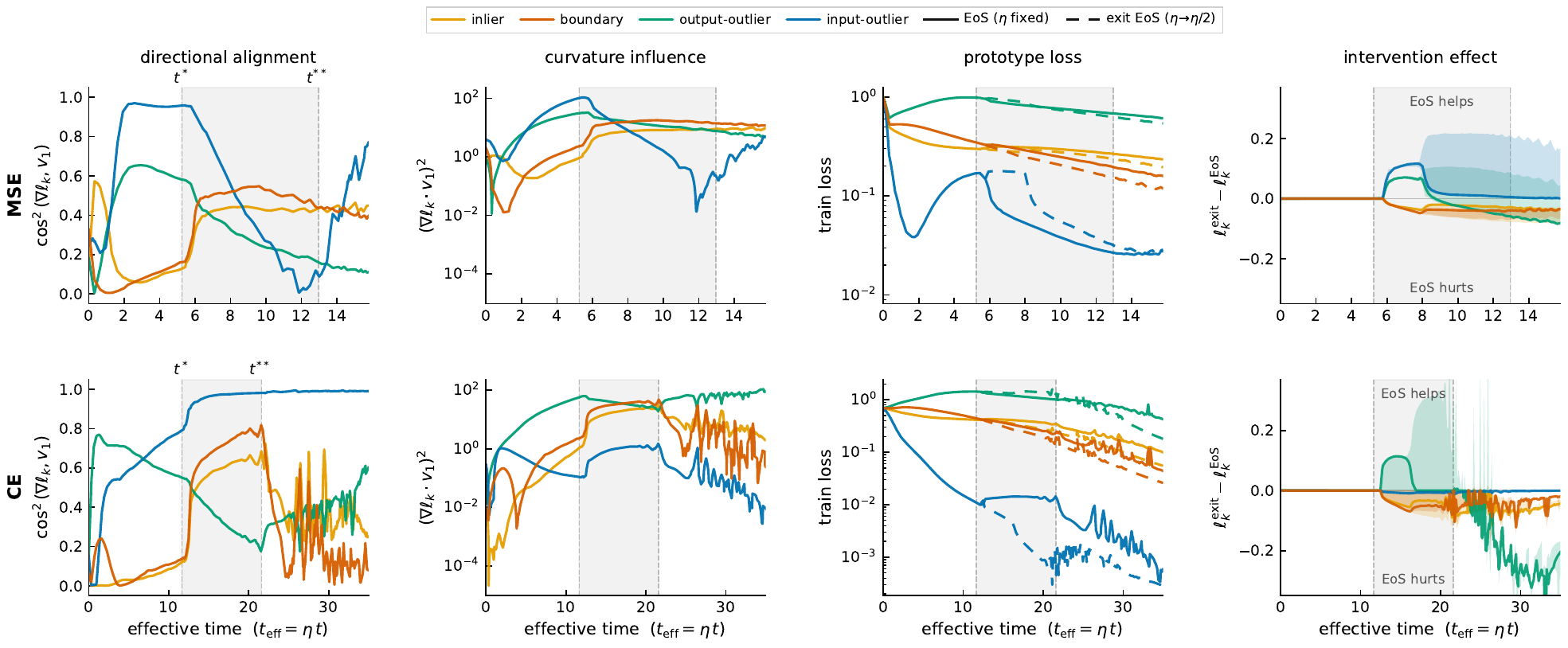}
    \caption{\textbf{Gradient persistence determines which group retains curvature influence.} Identical seeds and configurations; only the loss differs. \textbf{Top:} Input-outlier have elevated alignment, strong curvature influence, and captures the \eos{} advantage. \textbf{Bottom:} Alignment for input-outlier is high, but gradient saturation weakens their curvature influence; the \eos{} advantage shifts to output-outliers, whose gradients remain active.}
    \label{fig:mse_ce}
\end{figure}

Under MSE as the specified loss function, per-example gradients scale with the residual and persist even for confidently classified points. Conversely, under CE, gradients vanish as points are learned and confidence grows. The experiment reveals that while the gradient for input-outliers points towards $\vone$ across both loss functions, $(\nabla\ell_k \cdot {v}_1)^2$ collapses by orders of magnitude in CE as gradient norms shrink. The functional consequence is that output-outliers, with the most elevated curvature influence, becomes the sole beneficiary at the edge of stability.

\subsection{Natural Outliers in the Data Distribution.} 
\label{sec:natural}
To verify that the coupling between centroid distance and $\vone$ alignment is not an artifact of synthetic prototype construction, we measure per-example directional alignment directly on the natural CIFAR-10 training distribution. For each of the $10{,}000$ training examples, we compute centroid distance $\|x_i - \mu_c\|$ and per-example alignment $\cos^2(\nabla\ell_i, v_1)$ at checkpoints throughout training. 

Figure~\ref{fig:natural} shows the relationship at two timepoints. During progressive sharpening, centroid distance and $\cos^2$ are uncorrelated (Spearman $\rho = -0.11$). At \eos{}, a positive correlation emerges ($\rho = 0.39$), where examples that are farther from their class centroid align more strongly with $\vone$. Centroid distance, a purely distributional quantity computed before training, correlates with which examples will dominate the unstable direction at \eos{}. 

\begin{figure} [h]
    \centering    \includegraphics[width=0.6\linewidth]{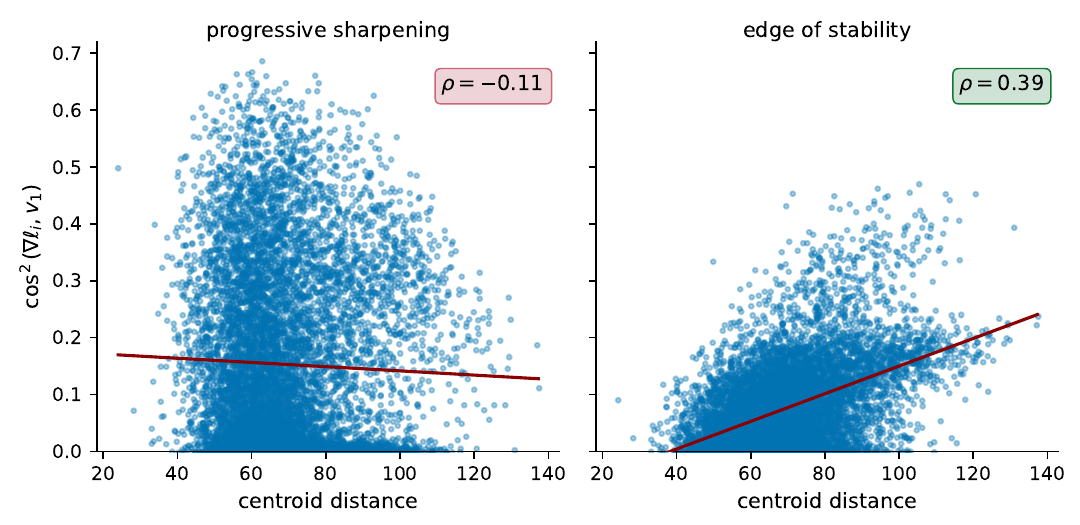}
    \caption{\textbf{Centroid distance predicts per-example 
    alignment with $\vone$ at \eos{}.} Each dot is one training 
    example; the line indicates the monotonic trend captured by Spearman $\rho$. \textbf{Left:} during progressive sharpening, 
    centroid distance and $\cos^2(\nabla\ell_i, v_1)$ are 
    uncorrelated ($\rho = -0.11$). \textbf{Right:} at \eos{}, 
    correlation emerges ($\rho = 0.39$). Full trajectory of correlation 
    is shown in Appendix~\ref{app:natural}.}
    \label{fig:natural}
\end{figure}

\section{Generalizing the Alignment Principle}
\label{sec:generalize}
Sections~\ref{sec:experiments} and~\ref{sec:mechanism} imply a clear prediction: if alignment$\times$persistence drives the effect, then changing only the data geometry, while holding the model, optimizer, and loss fixed, should change the dominant group and, in turn, the functional benefit conferred by \eos{}. 

\subsection{Performance on Harder Class Boundaries.} \label{sec:closer}
We verify this prediction on a more challenging class pair (cat vs.\ dog, $n=10{,}000$), where the relative geometry of the prototype groups shifts. With $\alpha=3$, boundary points are the most distant group from the centroid, not the input-outliers. Correspondingly, boundary points dominate $\vone$ and become the primary beneficiary of \eos{}. Increasing $\alpha$ to 10 restores input-outliers as the most distant group and transfers the advantage back to them (Appendix~\ref{app:35}).

\subsection{Does Edge-of-Stability Improve Generalization or Robustness?}
Sections~\ref{sec:experiments}--\ref{sec:mechanism} established when a subset benefits from \eos{}. We now ask whether this training-time selectivity transfers to test-time behavior and share preliminary results.

\paragraph{Adversarial robustness.} Figure~\ref{fig:alpha-adv} reports PGD adversarial accuracy ($\varepsilon = 0.03$, 10 steps, step size $\varepsilon/4$) on 50 test-set boundary points selected by $k$-NN ambiguity \citep{madry2018adv}. With $\alpha = 3$, boundary points dominate $\vone$, and the \eos{} branch maintains higher adversarial accuracy than the exit branch after $t^{**}$. The stability constraint implicitly sharpens the decision boundary in the region most relevant to robust classification. With $\alpha = 10$, where input-outliers instead dominate, the pattern reverses: the exit branch now achieves higher adversarial accuracy on boundary points, because the \eos{} branch in this configuration has been spending its optimization budget elsewhere.

\begin{figure} [h]
    \centering
    \includegraphics[width=0.6\linewidth]{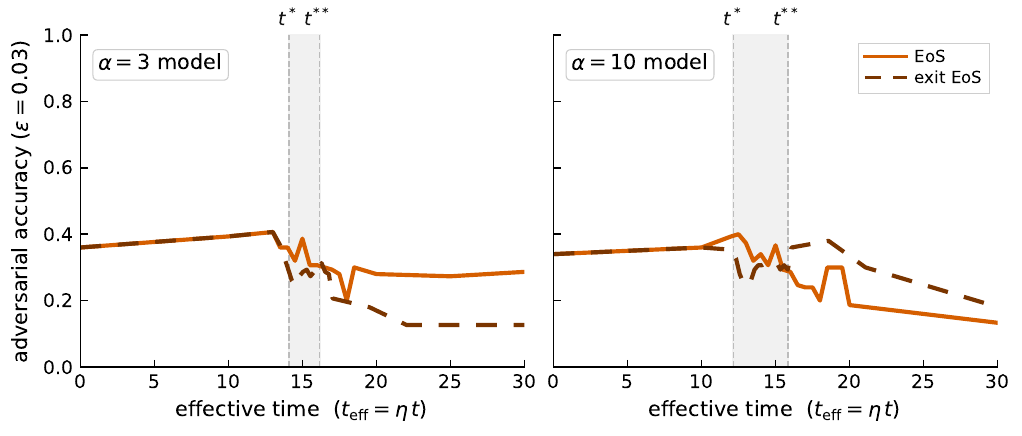}
    \caption{\textbf{EoS improves adversarial robustness only when boundary points dominate $\vone$.} \textbf{Left} ($\alpha=3$, boundary dominates $\vone$): the EoS branch (solid) outperforms the exit branch (dashed) after $t^{**}$. \textbf{Right} ($\alpha=10$, input-outlier dominates $\vone$): the pattern reverses, and the exit branch performs better. Robustness gains appear only when \eos{} prioritizes the evaluated subset. Single seed.}
    \label{fig:alpha-adv}
\end{figure}

\paragraph{Out-of-distribution generalization.} 

Figure~\ref{fig:alpha-ood} reports MSE loss on input-outliers constructed at varying $\alpha_{\text{test}}$, evaluated at the checkpoint immediately after $t^{**}$ in training. With $\alpha = 3$, the exit branch achieves lower loss at large $\alpha_{\text{test}}$, indicating no OOD advantage for input-outliers. With $\alpha = 10$, where input-outliers dominate $\vone$, the \eos{} branch achieves lower loss at large $\alpha_{\text{test}}$: optimization on input-outliers at EoS during training appears to transfer to OOD generalization along $v_{\mathrm{diff}}$.

\begin{figure} [h]
    \centering
    \includegraphics[width=0.6\linewidth]{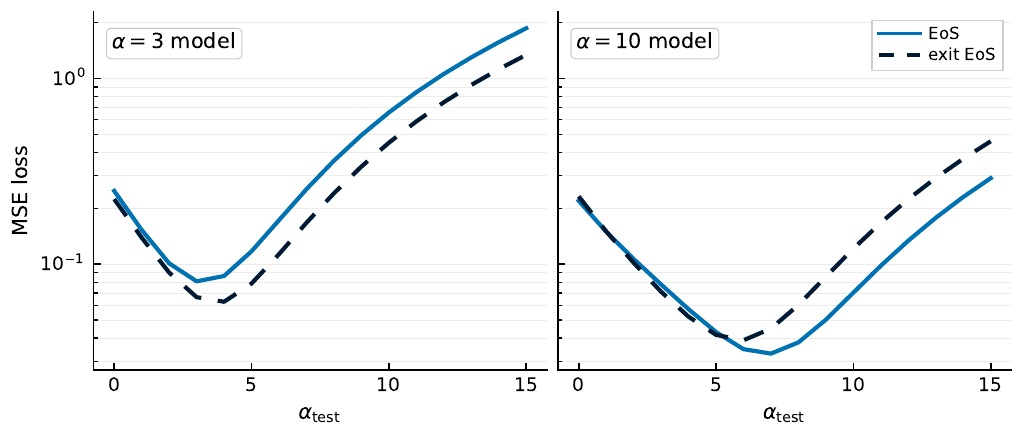}
    \caption{\textbf{EoS steers generalization toward the dominant group.} Test MSE on input-outliers across $\alpha_{\text{test}}$ (after $t^{**}$). \textbf{Left} ($\alpha=3$, boundary dominates $\vone$): no OOD advantage. \textbf{Right} ($\alpha=10$, input-outlier dominates $\vone$): EoS improves OOD performance at large $\alpha_{\text{test}}$. Single seed.}
    \vspace{-0.1cm}
    \label{fig:alpha-ood}
\end{figure}

\section{Discussion}
\label{sec:discussion}

We find that the \eos{} stability constraint acts as an inductive bias, not merely an implicit regularizer. Rather than selecting among equivalent minimizers, it determines which subsets of the data are optimized, as a function of data geometry and loss. Concretely, \eos{} optimizes on the subset with the largest curvature influence, and the functional consequence depends on which subset dominates.

Comparing the $\alpha =3$ and $\alpha =10$ models illustrates how data geometry changes the functional effect of \eos{}. When boundary points dominate $\vone$, \eos{} focuses optimization near the decision boundary and can improve adversarial robustness; when input-outliers dominate, it instead shifts optimization toward distributional tails and can improve extrapolation along the outlier direction (Appendix~\ref{app:35}). Thus, hyperparameters typically viewed as controlling convergence---such as learning rate or loss---may also influence which subset of the data, and hence which functional property of the model, is emphasized.

This perspective offers a possible explanation for divergent findings in the flatness literature. The benefit of low or controlled sharpness is not determined by scalar flatness alone, but by which subset captures the top curvature direction during training. Different empirical settings may therefore induce different functional outcomes, depending on whether boundary points, input-outliers, or other subgroups dominate $\vone$. This yields a testable prediction: changing the subgroup that dominates $\vone$ should change the direction of the resulting robustness or generalization effect.

\paragraph{Limitations and scope.}
Our experiments are restricted to relatively small models and datasets, primarily because tracking \eos{} dynamics requires repeated estimates of the top Hessian eigenvalue via Hessian--vector products, whose cost scales with both model and dataset size. This constraint is common in the \eos{} literature, including the settings of \citep{cohen2021gradient,andreyev2024edge}, from which we adapt our experimental setup. Consequently, whether the observed selectivity persists for more classes, larger datasets, or higher-resolution inputs remains open. For the same reason, we focus on full-batch training: a major open direction for the field is to understand in what way the self-stabilization argument \citep{damian2023selfstabilization} could be generalized to mini-batch optimizers. Extending the corresponding subset-level analysis to mini-batch optimizers requires such theoretical foundations. Our prototype taxonomy is also defined in pixel space, which maps most transparently to gradient geometry in MLPs. In convolutional architectures, our experiments suggest that the same predictive quantity, $(\nabla \ell_k \cdot v_1)^2$, remains informative, but the identity of the dominant group can shift as learned features reshape the input-to-gradient mapping. Extending prototype construction to representation space is therefore a natural next step. Finally, our robustness and generalization results are preliminary; systematic evaluation across architectures, seeds, and broader distribution shifts remains future work.

\section{Conclusion}

The edge of stability is not a uniform optimization phenomenon, but a selective one: the stability constraint redistributes learning unevenly across the training distribution, accelerating progress on some subsets while suppressing others. We find that two properties jointly determine which subsets benefit: directional alignment with the top Hessian eigendirection and persistence of gradient signal under the loss function. Removing either component---by disrupting directional coherence or by inducing gradient saturation---eliminates or shifts the advantage to different subsets. Importantly, by characterizing how a curvature constraint differentially shapes learning across the data distribution, this work provides a step toward connecting implicit regularization in parameter space with inductive bias over examples.

\bibliographystyle{unsrt}
\bibliography{references}

\newpage
\appendix
\section{Experimental setup}
\label{app:details}

\subsection{Architecture} We use a fully connected MLP that flattens each input image to a vector, then applies two hidden linear layers of width $512$ with ReLU activations, followed by a final linear classifier to 2 output classes.

\subsection{Training procedure and hyperparameters} Our primary experiments use full-batch GD, which is deterministic given a fixed initialization. We varied the initialization seed to obtain different optimization trajectories. Five pre-determined identical seeds were used in all plots shown in main text. An initial scaling of 0.2 was applied to all runs. 

\subsection{Branching Intervention Implementation}
We isolate the effect of a learning-rate drop at the edge of stability by running each configuration as a matched pair:
\begin{itemize}[leftmargin=*]
\item a \emph{baseline} trained at $\eta$ for the full step budget;
\item a \emph{fork} that follows the baseline trajectory up to a fork step $t^*$ and then applies a single scheduled drop to $\eta'=c\eta$ (we use $c = 0.5$).
\end{itemize}
The fork step $t^*$ is defined as the first logged step at which $\lambda_{\max}(\nabla^2 \mathcal{L})$ crosses $2/\eta$, the EoS threshold for full-batch GD. We utilize a learning rate of $0.01$ for all plots.

Because GD is deterministic given a fixed initialization and has no optimizer state beyond the weights, the fork is implemented by resuming from a checkpoint of the baseline taken just before $t^*$ and continuing at $\eta$ up to $t^*$, at which point the drop is applied. All other settings are identical to the baseline---dataset and initialization seeds, architecture, loss, batch size, step budget, and the fixed prototype subset.

Both branches log all diagnostics on a uniform every-32-steps grid spanning the full training window, ensuring the baseline and fork are sampled at identical step indices so their per-prototype curves can be compared index-for-index without interpolation. The only quantity that differs between a baseline and its fork is the scheduled $\eta \to c\eta$ at step $t^*$. This makes the fork a minimal counterfactual for the learning-rate drop and supports the causal language in Section~\ref{sec:experiments}. Both the fork and the baseline are compared via distance traveled $\eta * \text{step}$ to provide a valid comparison of speed across different learning rates.

\subsection{\eos{} detection and timing}

The effect size depends on when $t^*$ falls relative to the onset of the second EoS regime. When $t^*$ lies well after the sharpness plateau is established, the post-fork branches converge to similar outcomes and the trade-off is attenuated. This is consistent with the mechanism: the intervention acts by redirecting a constraint-shaped trajectory, so once that shaping has occurred, it has less leverage.

The logged $t^*$ has two sources of small offset relative to the true EoS onset. First, the every-32-steps logging grid limits detection resolution: $t^*$ can only be flagged at a logged step, so the actual crossing may have occurred up to 32 steps earlier. 

Second, the cubic term of the Taylor expansion around a point adds a jitter around $2/\eta$ \citep{chen2023beyond}. Specifically, the period-2 orbit of GD exists for $\eta \in \left( {2}/{f''(\bar{x})}, {2}/({f''(\bar{x}) - \epsilon \cdot f^{(3)}(\bar{x}))} \right)$, so the bifurcation is governed by the curvature at the orbit, $f''(\bar{x}) - \epsilon \cdot f^{(3)}(\bar{x})$, rather than the curvature at the minimum, $f''(\bar{x})$, which our $2/\eta$ crossing criterion targets. Together, these two effects can cause the logged $t^*$ to lead or lag the visible onset of oscillation by a small number of steps. This is consistent with the small offsets visible in some figures and does not affect the branching design: both branches share the trajectory up to $t^*$, so the post-fork comparison is unaffected by a few-step mistiming in the detection of EoS onset.

\subsection{Compute and Codebase}
\textbf{Compute.} All experiments ran on a single NVIDIA L40S GPU (48 GB) on an internal academic SLURM cluster, with 8 CPU cores and 64 GB of system RAM per job. A 10,000-step run takes approximately 10 minutes for the MLP and 30 minutes for CNN/ResNet models. The number of GPU-hours is on the order of 15 for all reported experiments. 

\textbf{Codebase.}
Our implementation builds on the open-source codebase (\url{https://github.com/arseniqum/edge-of-stochastic-stability}) \citep{andreyev2024edge} (Apache 2.0; \citealp{eoss}), which provides the CIFAR-10 training pipeline and curvature/sharpness logging used in prior work on edge of stochastic stability. We extend this framework with (i) a prototype taxonomy and synthetic outlier construction, (ii) an EoS branching intervention, (iii) per-group alignment diagnostics (gradient norms and cosine alignment), and (iv) per-subset loss and sharpness metrics.

\newpage

\section{Related Work}

Our work connects three threads: training dynamics at \eos{}, sample-centric analyses of which examples drive learning, and curvature-based implicit regularization. We argue these are linked: the directional structure at \eos{} governs how optimization effort is distributed across the data, connecting where the optimizer moves in parameter space to what it learns.

\subsection{Training Dynamics at the Edge of Stability} 

\paragraph{Onset of EoS.} \eos{} occurs when the top Hessian eigenvalue grows along the optimization trajectory until it reaches the stability threshold $2/\eta$ \citep{jastrzebski2019relation, jastrzebski2020break-even, cohen2021gradient}. Cohen et al. \citep{cohen2021gradient} termed this growth phase progressive sharpening. In full-batch gradient descent, this gives rise to the edge of stability, where sharpness saturates near $2/\eta$ while the loss continues to decrease \citep{cohen2021gradient}. \eos{} describes a regime where sharpness saturates near $2/\eta$ while loss continues to decrease \citep{cohen2021gradient}. Recent work has shown that \eos{}-like phenomena extend beyond full-batch GD to adaptive optimizers, stochastic training, and momentum dynamics \citep{cohen2023adaptive, andreyev2024edge, andreyev2026momentum}. Our goal is complementary: we focus on the cleanest setting for \eos{}, full-batch gradient descent, and use this controlled regime to show that the global stability constraint is also a selective mechanism over the data distribution.

\paragraph{Self-stabilization at EoS.} A mechanistic account is provided by Damian et al., where oscillations along the top Hessian eigenvector $\vone$ bound curvature via self-stabilization \citep{damian2023selfstabilization}. Specifically, these oscillations can feed back through higher-order derivatives to reduce sharpness, yielding an implicit projected dynamics near the stability boundary \citep{damian2023selfstabilization}. Prior work characterizes \eos{} through global dynamics—fast oscillation along $\vone$ and slow sharpness-reducing drift along flat directions \citep{zhu2023understanding, lyu2022understanding, arora2022understanding}. We instead ask which examples contribute to these dynamics, revealing a selective effect over the data distribution that scalar sharpness obscures.

\subsection{Sample-Centric Learning} A parallel literature studies which individual examples drive learning \citep{koh2017understanding, feldman2020neural, feldman2020memorize, toneva2019empirical, swayamdipta2020dataset, paul2021deep, pezeshki2021gradient, rosenfeld2024outliers, sorscher2022beyond}. Rare or atypical examples can disproportionately influence generalization: memorizing them is necessary for near-optimal generalization on long-tailed distributions \citep{feldman2020neural, feldman2020memorize}. Toneva et al. \citep{toneva2019empirical} also tracks forgetting events and finds rare examples are repeatedly forgotten and relearned \citep{toneva2019empirical}. Other works score rare example on difficulty throughout training and reveal ambiguous examples (those which are flip throughout training) are key for out-of-distribution generalization \citep{swayamdipta2020dataset}. Rare examples can also dominate gradient signal and sharpening dynamics \citep{paul2021deep, pezeshki2021gradient, rosenfeld2024outliers}. In these approaches, example difficulty is defined relative to the model state. We instead define example groups directly from the data distribution and ask how optimization treats them, shifting the question from which examples are hard to which are structurally favored. Other studies \citep{paul2021deep} also introduce GraNd (per-example gradient norm) and EL2N (per-example error-vector norm) as scores that rank training examples by influence on optimization. Our curvature-influence score $(\nabla\ell_i\!\cdot\!\vone)^{2} = \|\nabla\ell_i\|^{2}\cos^{2}\theta_i$ adds the $\cos^{2}\theta_i$ factor on top of GraNd's $\|\nabla\ell_i\|^{2}$. The $\cos^{2}$ factor is what makes the score predictive of the \eos{} beneficiary's identity, and it is invisible to GraNd and EL2N. 

\subsection{Implicit regularization through curvature.} 

Curvature has been widely linked to generalization: sharp minima correlate with worse performance \citep{keskar2017large}, while optimization methods and hyperparameters implicitly bias toward flatter solutions \citep{foret2021sharpness, lewkowycz2020large, jastrzebski2018three, hochreiter1997flat, keskar2017large, jiang2020fantastic, neyshabur2017exploring, wu2017towards}. Other methods make the flatness objective explicit or directly bias the final iterate toward flatter regions, including Sharpness-Aware Minimization and stochastic weight averaging \citep{foret2021sharpness, izmailov2018averaging}. Much of the flatness literature treats curvature as a property of the final solution or of the algorithm's implicit bias. Work on \eos{} studies curvature dynamically along the training trajectory, including its convergence and implicit-regularization effects \citep{arora2022understanding, ahn2022understanding, ahn2023learning, zhu2023understanding, lyu2022understanding}. We add a data-level perspective: at \eos{}, curvature acts selectively across examples, with functional consequences for which subsets generalize and which are robust.

\subsection{Our positioning}

This work draws on three lines of work: the dynamics of training at \eos{}, sample-centric analyses of which examples drive learning, and implicit regularization through curvature. The threads are usually treated separately, and this paper's contribution sits where they meet: the directional structure of \eos{} governs how optimization effort is distributed across the data distribution, connecting where the optimizer goes in parameter space to what it learns from the data. The closest prior work \citep{rosenfeld2024outliers}, observes that small groups of outliers with large-magnitude features have an outsized effect on sharpening and \eos{} dynamics; we extend this with a model-independent prototype taxonomy and analysis on initial data distribution.

\newpage

\section{Architecture, Optimizer, Class Pair Robustness}
\label{app:robustness}
Our primary results focus on an MLP trained with full-batch gradient descent under both mean-squared error and cross-entropy losses. In this section, we assess robustness across alternative architectures, optimizers, and class pair, across 3 seeds and with $\eta$ = 0.01. We examine (i) the divergence in sharpness between baseline and exit-from-EoS runs, (ii) the curvature profile of the baseline, and (iii) how this curvature predicts the intervention effect, defined as $\Delta \ell_k = \ell_k^{\text{exit}} - \ell_k^{\text{EoS}}$.

\subsection{Architecture robustness}
The network's architecture determines how input-space geometry maps to gradient-space geometry. In an MLP, which processes raw pixel vectors, centroid distance translates directly into gradient atypicality: pixel-space outliers produce distinctly oriented gradients. Convolutional architectures transform this mapping through learned spatial features, pooling, and normalization, which can compress pixel-space differences that the MLP preserves. As a result, the group with the highest centroid distance need not be the group with the largest curvature influence. 

The predictive quantity $(\nabla\ell_k \cdot v_1)^2$ remains consistent across architectures---what changes is which group achieves the highest value. \\ 

\textbf{CNN.}   

The CNN consists of three convolutional layers with channel widths $64,64,128$, all using $3\times 3$ kernels, stride $1$, no padding, and ReLU activations, with $2\times 2$ max-pooling after the second and third convolutional layers. The resulting feature map is flattened and passed through a fully connected hidden layer of width $512$ with ReLU activation, followed by a linear classifier to $C$ output classes.

\begin{figure} [h]
    \centering
    \includegraphics[width=1\linewidth]{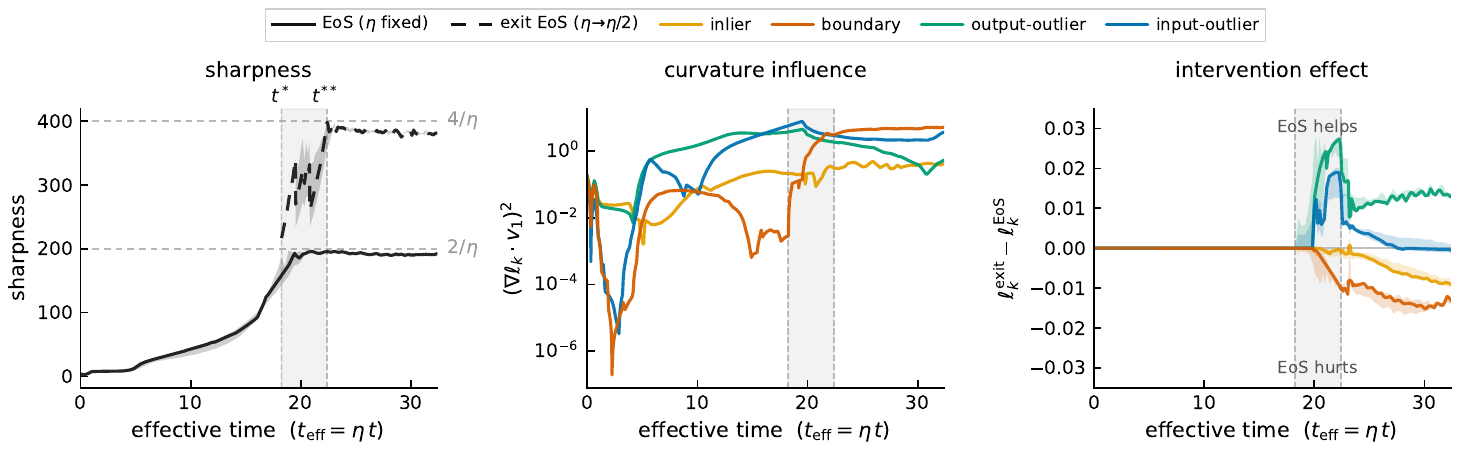}
    \caption{GD CNN MSE. Curvature influence is comparable for output-outliers and input-outliers, both benefit.}
    \label{fig:piano19}
\end{figure}

\FloatBarrier

\textbf{ResNet.}  

We use a batch-normalization-free ResNet-14 with an initial $3\times 3$ convolutional stem of width $16$, followed by three residual stages with channel widths $16,32,64$ and block counts $[2,2,2]$. Each residual block contains two $3\times 3$ convolutions with ReLU activations and identity BatchNorm replacements; downsampling occurs in the first block of stages 2 and 3, followed by global average pooling and a final linear classifier to $C$ output classes.

\begin{figure} [h]
    \centering
    \includegraphics[width=\linewidth]{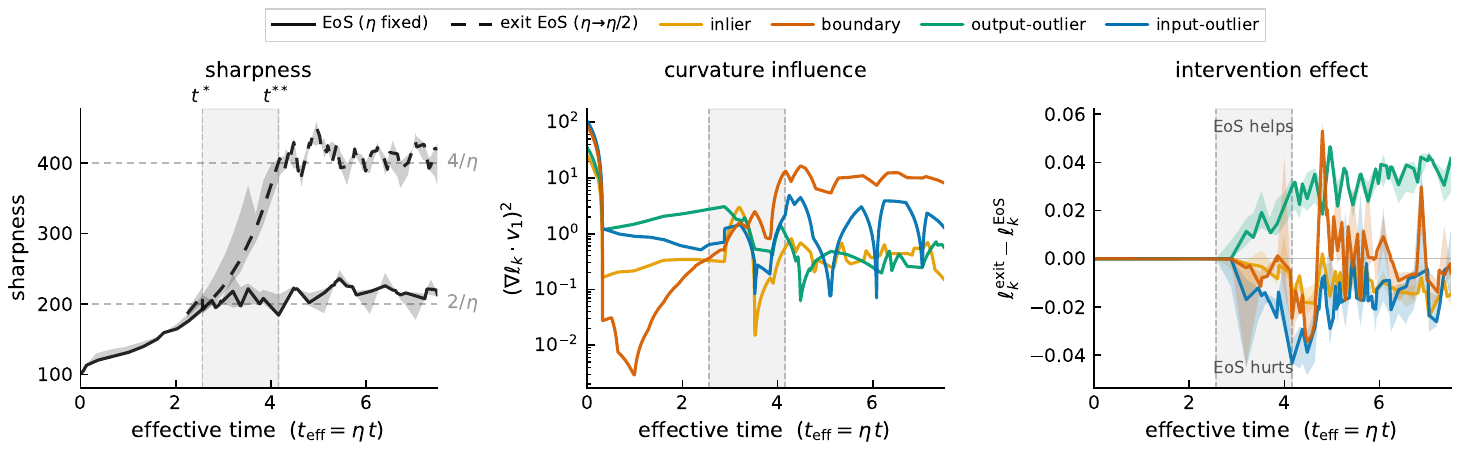}
    \caption{GD ResNet MSE. Curvature influence is highest for output-outliers, and it is the primary beneficiary of EoS.}
    \label{fig:placeholder}
\end{figure}

\FloatBarrier

\subsection{Optimizer robustness}

We primarily focused on optimizers which maintained a fixed curvature landscape. We did not extend this analysis to adaptive optimizers such as Adam \citep{adam} as it reshapes the curvature landscape at each step. Instead, stochastic gradient descent (SGD) and full-batch gradient descent with momentum add randomness and acceleration, respectively, to each step the optimizer takes down the loss landscape. For a large batch for Figure \ref{fig:sgd}, we see that the highest curvature influence group (input-outliers) correspondingly is the primary beneficiary of EoS. Similarly for Figure \ref{fig:gdm}, input-outliers have the highest curvature influence, and the intervention benefits them. \\

\textbf{SGD} (batch size=128).
\begin{figure} [h]
    \centering
    \includegraphics[width=1\linewidth]{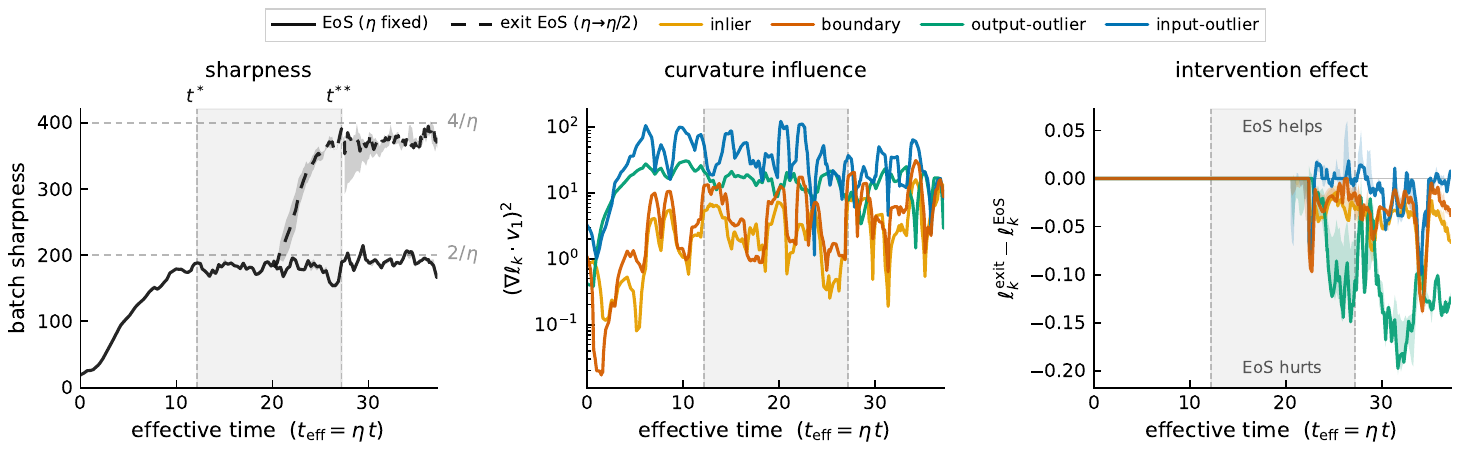}
    \caption{SGD MLP MSE. Curvature influence is highest for input-outliers, and it is the primary beneficiary of EoS.}
    \label{fig:sgd}
\end{figure}

\FloatBarrier

\textbf{GD with Momentum} ($\beta$ = 0.9).
\begin{figure} [h]
    \centering
    \includegraphics[width=1\linewidth]{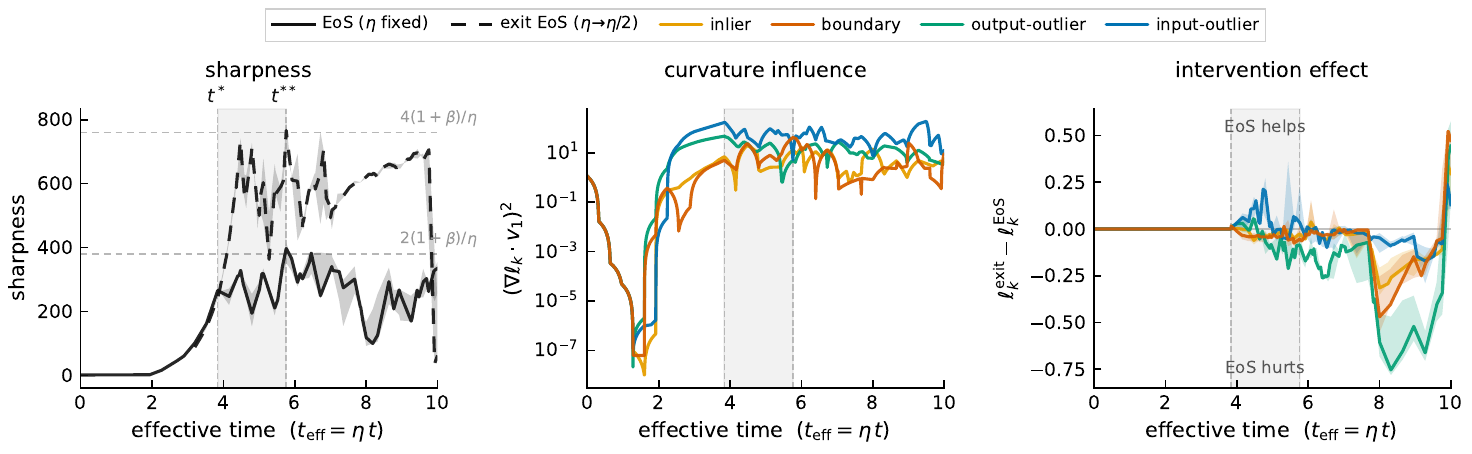}
    \caption{GD Momentum MLP MSE. Curvature influence is highest for input-outliers, and it is the primary beneficiary of EoS. Momentum implemented as in \citep{polyak1964some} and EoS threshold implemented described in large-batch momentum in \citep{andreyev2026momentum}}
    \label{fig:gdm}
\end{figure}

\FloatBarrier

\subsection{Class pair robustness}
\label{app:35}
\textbf{A harder classification task.}
On the closer pair (3,5), $\alpha$ = 3 is no longer sufficient to make input-outliers the most atypical subset; boundary points dominate atypicality instead (Figure~\ref{fig:piano35}). The alignment principle predicts that boundary points should therefore capture the \eos{} advantage on this pair, and they do (Figure~\ref{fig:multi35}).

\begin{figure} [h]
    \centering
    \includegraphics[width=0.75\linewidth]{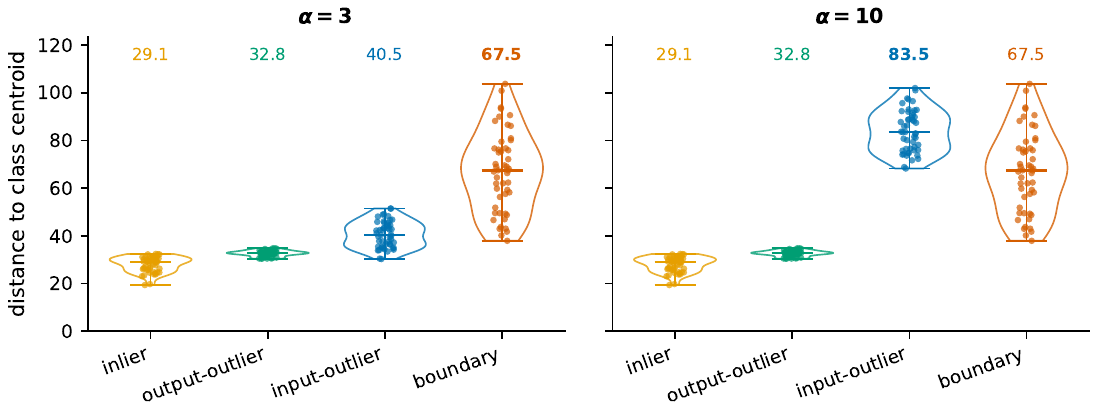}
    \caption{Distribution of centroid distance by prototype subgroup on the (3,5) class pair. Boundary points have the largest median distance under $\alpha=3$ but input-outliers have the largest median distance under $\alpha=10$.}
    \label{fig:piano35}
\end{figure}

\textbf{Ablation on $\alpha$.}
At $\alpha=3$, boundary points are the primary beneficiaries at \eos{}. At $\alpha=10$, input-outliers become the most distant from their class centroids and the \eos{} advantage transfers to them.

\begin{figure} [h]
    \centering
    \includegraphics[width=1\linewidth]{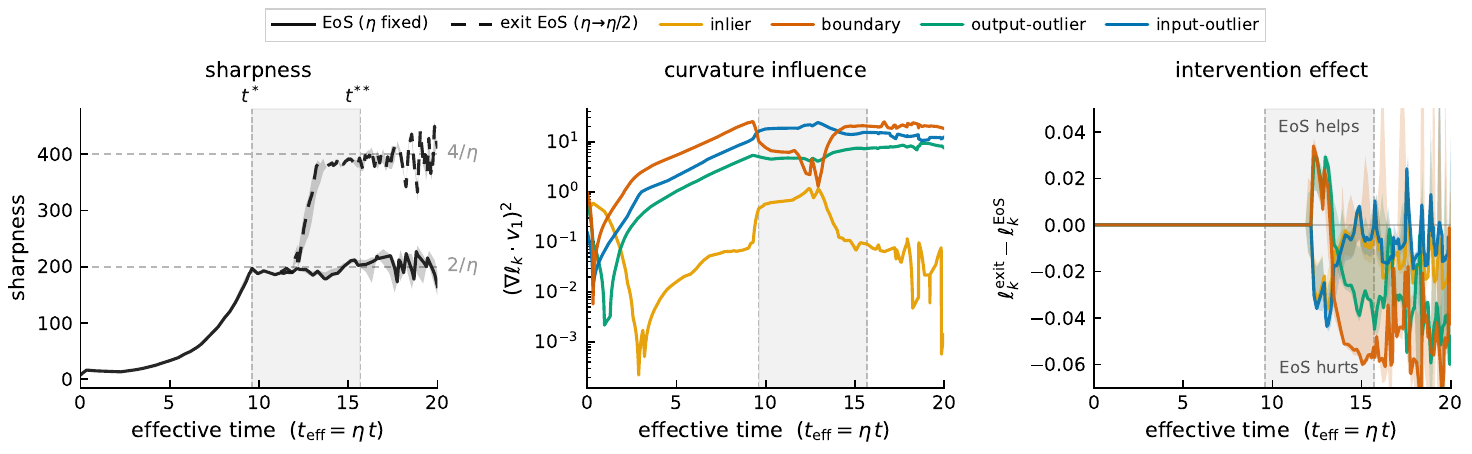}
    \caption{Curvature influence under input-outlier construction ($\alpha = 3$). The boundary group has the highest curvature influence and is the primary beneficiary of EoS.}
    \label{fig:multi35}
\end{figure}

\FloatBarrier

\begin{figure} [h]
    \centering
    \includegraphics[width=1\linewidth]{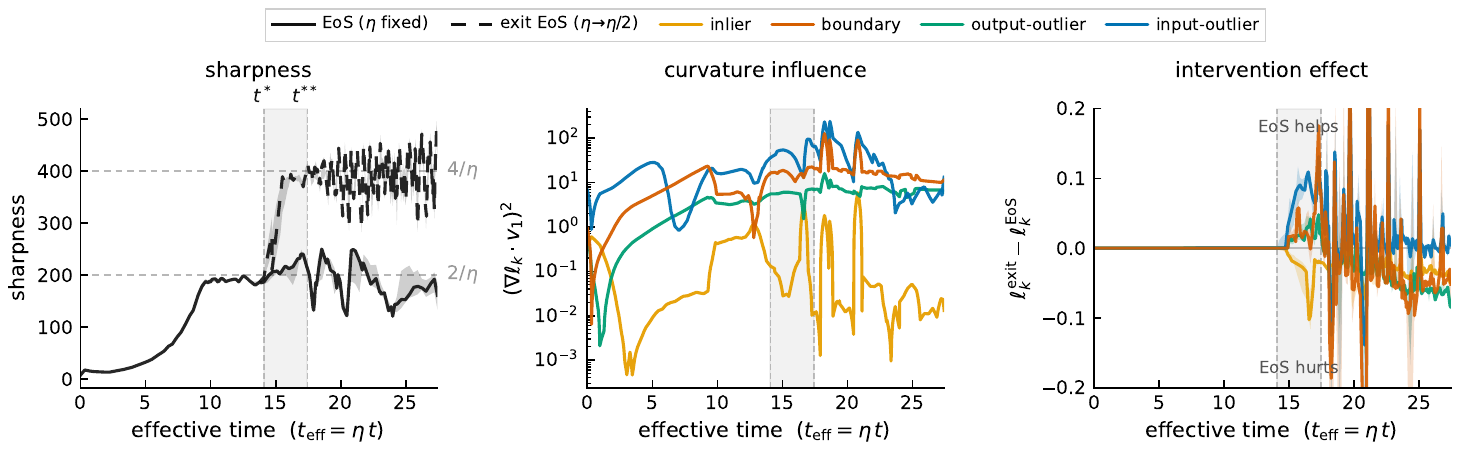}
    \caption{Curvature influence under input-outlier construction ($\alpha = 10$). The boundary group has the highest curvature influence and is the primary beneficiary of EoS.}
    \label{fig:placeholder}
\end{figure}

\FloatBarrier

\newpage
\section{Input-Outlier Construction Ablation}
\label{app:coherent_random}

The conceptual schematic for the directional perturbation is shown in Figure~\ref{fig:coherent_random_schema}. Figure~\ref{fig:piano19} confirms that geometric atypicality of input-outlier is preserved under the random-direction control.

\begin{figure} [h]
    \centering
    \includegraphics[width=0.75\linewidth]{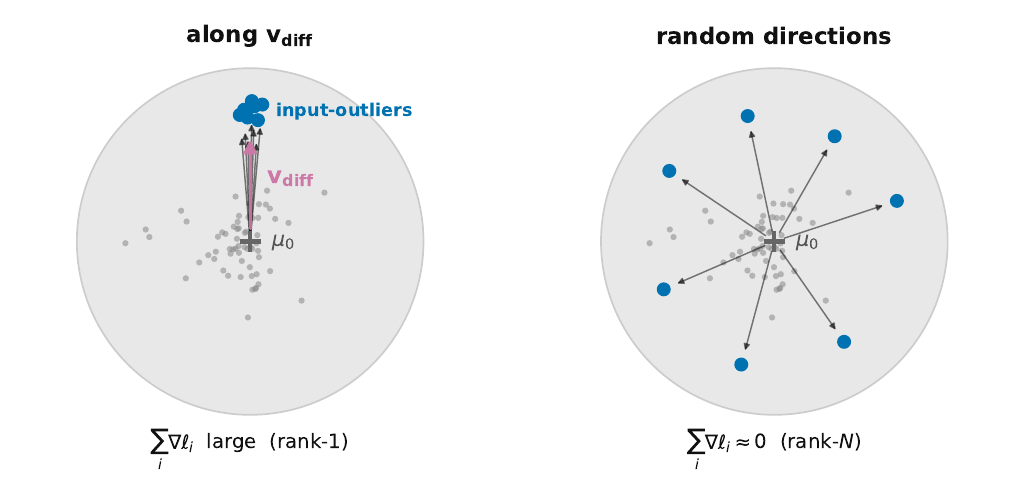}
    \caption{Schematic for coherent vs.\ incoherent input-outlier construction. \textbf{Left:} input-outliers displaced along a shared direction $v_{\mathrm{diff}}$; per-example gradients reinforce, producing a dominant curvature contribution along one direction. \textbf{Right:} same displacement distance but in random orthogonal directions; per-example gradients partially cancel, producing a diffuse contribution across many eigenvectors. Seed points, labels, and centroid distances are identical across conditions.}
    \label{fig:coherent_random_schema}
\end{figure}

\begin{figure} [h]
    \centering
    \includegraphics[width=0.75\linewidth]{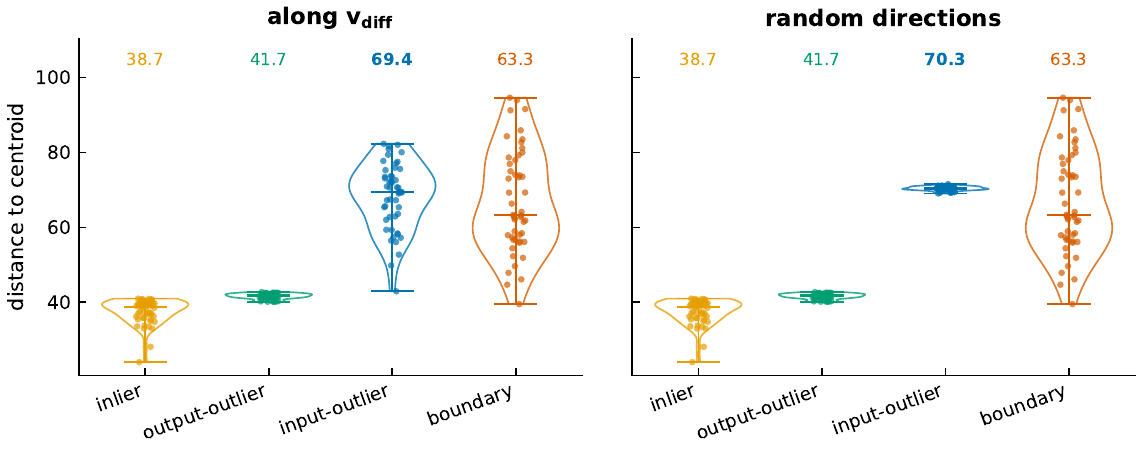}
    \caption{Centroid distance is preserved under the random-direction control (median 69.4 vs.\ 70.3 for along-$v_{\text{diff}}$ and random-direction outliers, respectively). Geometric atypicality is conserved.}
    \label{fig:piano19}
\end{figure}
\FloatBarrier

\newpage

\section{Natural points}
\label{app:natural}
We verify that the alignment principle holds on natural geometric outliers, with no synthetic displacement applied. Figure~\ref{fig:corr_progress} plots the Spearman rank correlation coefficient between centroid distance and per-example alignment $\cos^2(\nabla\ell_i, v_1)$ across training: the correlation is near zero during progressive sharpening and rises at \eos{} onset. The Spearman coefficient measures rank agreement; lines show the line of best fit for visual clarity.

Figure~\ref{fig:corr_points} shows the underlying per-example trajectories. Atypical points (red) exhibit high alignment that peaks just before \eos{} onset and then declines, while typical points (blue) remain weakly aligned over the same interval. This relationship is monotonic with respect to distance from the centroid near \eos{} onset. Later in training, the ordering reverses, with typical points eventually exceeding atypical ones in alignment.

\begin{figure} [h]
    \centering
    \includegraphics[width=0.6\linewidth]{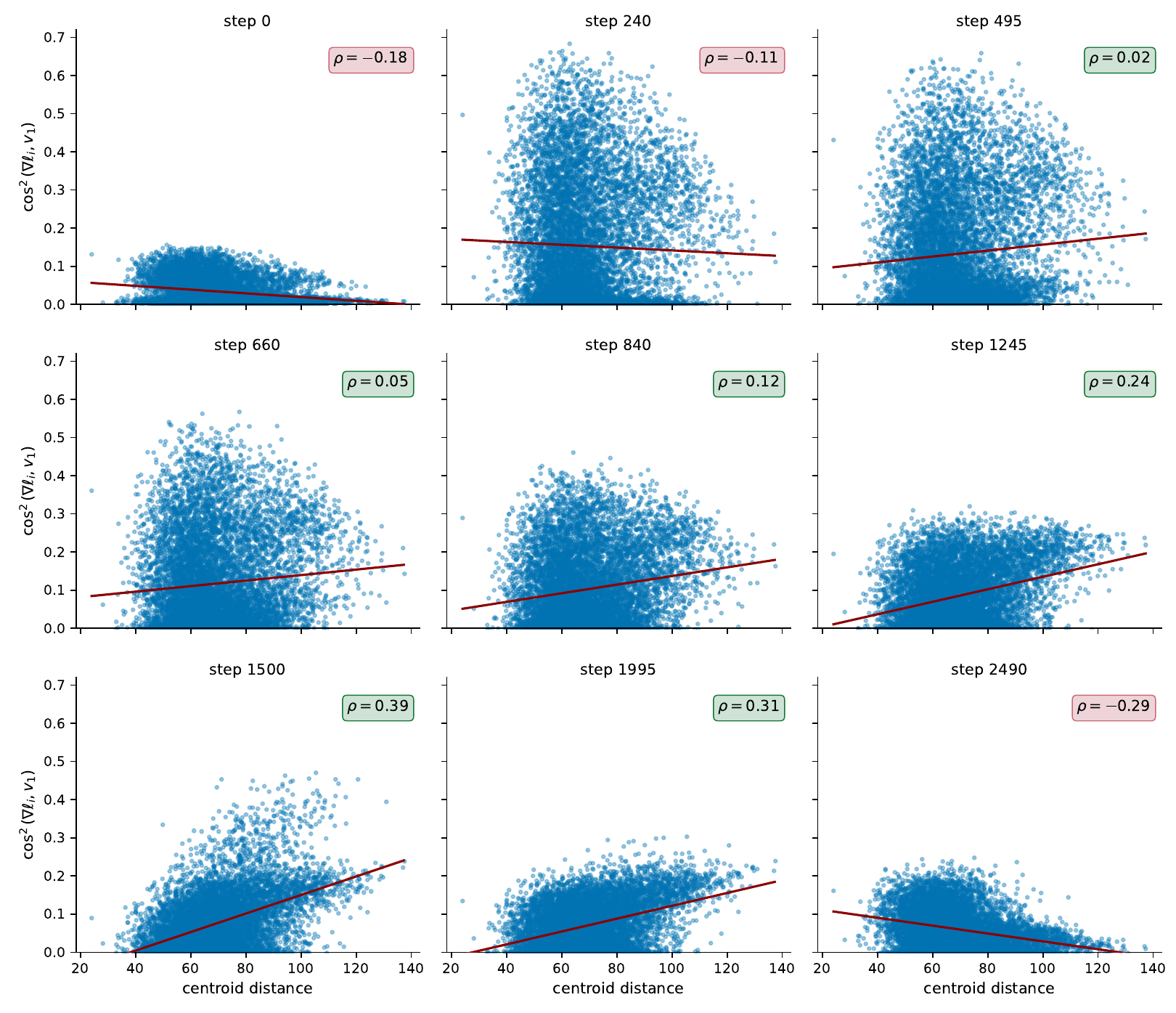}
    \caption{\textbf{Top:} beginning of training, initially negative correlation that increases; \textbf{Middle:} EoS onset, monotonically increasing correlation; \textbf{Bottom:} correlation peak when large-amplitude oscillations along $\vone$ develop, then monotonically decrease to negative after peaking}
    \label{fig:corr_progress}
\end{figure}
\FloatBarrier

\begin{figure} [h]
    \centering
    \includegraphics[width=0.6\linewidth]{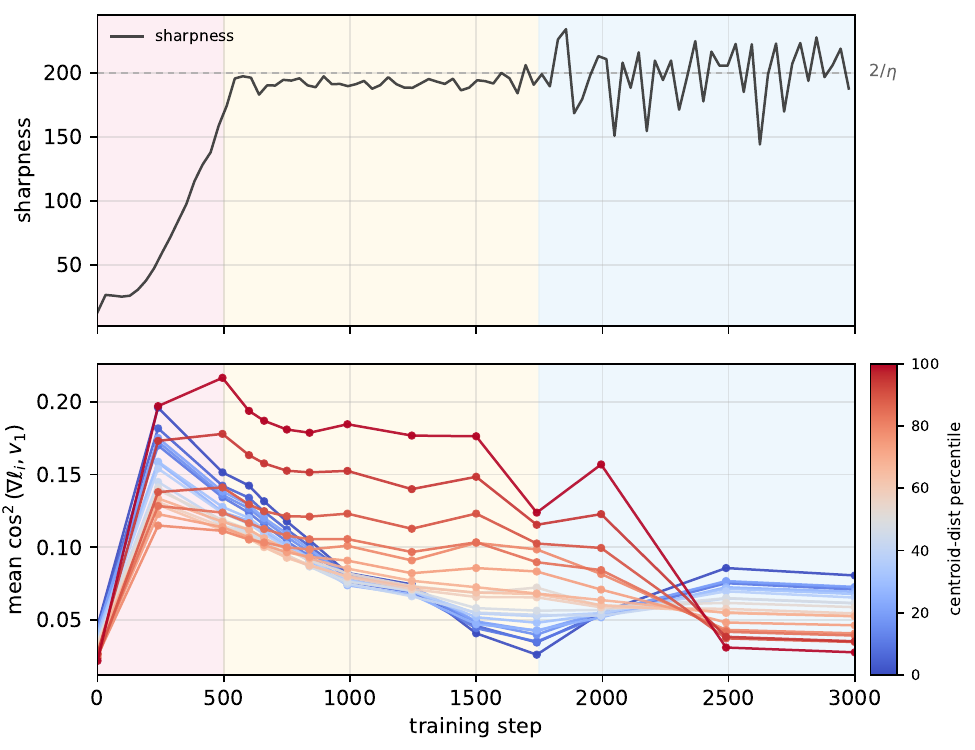}
    \caption{\textbf{Top:} correlation rises at \eos{} onset and peaks with sharpness oscillations. \textbf{Bottom:} alignment vs. centroid-distance percentile over training. Atypical points peak near \eos{} (step 660), then decline, and dominance shifts to typical points later in training.}
    \label{fig:corr_points}
\end{figure}
\FloatBarrier

\newpage

\section{Theory: EoS Self-stabilization as a Subset Selector}
\label{app:theory}

This appendix gives the local mathematical mechanism behind the subset-level effects measured in the main text. The starting point is the cubic self-stabilization model of \citep{damian2023selfstabilization}: at the edge of stability, gradient descent admits a slow, cycle-averaged description as projected gradient descent on the active sharpness constraint $S(\theta) = 2/\eta$. Under this assumption, we derive a first-order decomposition of the branch differential into two distinct contributions and isolate the two mechanisms tested experimentally---directional alignment and gradient persistence.

\subsection{Notation and assumptions}
\label{app:theory-notation}
Let $L:\mathbb{R}^d \to \mathbb{R}$ denote the full empirical training loss, and let \[ \ell_k(\theta) := \frac{1}{|P_k|} \sum_{i \in P_k} \ell(f_\theta(x_i), y_i) \] be the loss restricted to prototype group $k$. Let $H(\theta) := \nabla^2 L(\theta)$, $S(\theta) := \lambda_{\max}(H(\theta))$, and $\mathbf{v}_1(\theta)$ denote the Hessian, sharpness, and unit top Hessian eigenvector respectively (assumed simple throughout). We compare two branches forked from $\theta^* = \theta_{t^*}$ at EoS onset, where $\eta S(\theta^*) \approx 2$. The baseline branch continues at learning rate $\eta$. The exit branch uses learning rate $c\eta$ with $c \in (0,1)$, locally stable at the fork. We use the baseline learning-rate-scaled time \[ \tau := \eta(t - t^*), \] where $t$ denotes the gradient descent iteration index and $t^*$ denotes the EoS onset step at which the branches fork. This scaling gives a continuous-time comparison variable aligned to the baseline branch. Define the cycle-averaged (two-step average) branch differential \[ \bar\Delta\ell_k(\tau) := \bar\ell_k(\theta_{\text{exit}}(\tau)) - \bar\ell_k(\theta_{\text{EoS}}(\tau)), \] where the bar denotes the two-step average that suppresses the $O(\delta)$ phase term along $\mathbf{v}_1$ generated by the period-two EoS oscillation. Define the sharpness-gradient quantities \[ \alpha := -\langle \nabla L, \nabla S \rangle, \qquad \beta := \|\nabla S\|^2, \qquad \delta := \sqrt{2\alpha/\beta}, \] where $\delta$ measures the oscillation amplitude along $\mathbf{v}_1$ in the EoS regime \citep{damian2023selfstabilization}, and progressive sharpening corresponds to $\alpha > 0$. We use the following standing assumptions.

\paragraph{(A1) Smoothness and simple top eigenvalue.} $L \in C^3$ (L is thrice-differentiable) and each $\ell_k \in C^2$ ($\ell_k$ is twice-differentiable) in a neighborhood of $\theta^*$, with simple largest Hessian eigenvalue.

\paragraph{(A2) Local EoS approximation.} The baseline EoS trajectory admits a slow, cycle-averaged description as projected gradient descent on the active sharpness constraint $S(\theta) = 2/\eta$, in the sense of \citep{damian2023selfstabilization}.

\paragraph{(A3) Stable exit.} The exit branch is locally stable in the top direction at the fork, $c\eta S(\theta^*) < 2$.

\paragraph{(A4) Short-window Taylor regime.} Post-fork comparisons are made in a window short enough that gradients and Hessians admit Taylor expansion around $\theta^*$ with $O(\tau^2)$ remainder.

\paragraph{(A5) Cycle averaging.} Subset losses are compared via two-step averaging or short-window smoothing, suppressing the $O(\delta)$ instantaneous phase fluctuation along $u$ at EoS.

\subsection{Projected drift at the sharpness boundary}
\label{app:projected-drift}

\begin{lemma}[EoS slow drift]
\label{lem:eos-drift}

Under (A1)--(A2), the cycle-averaged EoS drift at $\theta^*$ is
\[
    \dot\theta_{\text{EoS}} = -\nabla L - \frac{\alpha}{\beta} \nabla S.
\]
Under (A3), the exit branch in baseline time $\tau$ has leading drift $\dot\theta_{\text{exit}} = -c\nabla L$.
\end{lemma}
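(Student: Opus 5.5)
We treat this lemma as essentially a consequence of assumption (A2), so the plan is mostly computational: write down the projected gradient flow onto the constraint $S(\theta)=2/\eta$ and identify the normal correction explicitly. For the exit branch we only need a time-rescaling argument.

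\emph{Step 1 (projection formula).} By (A2), the cycle-averaged baseline dynamics is projected gradient descent on the manifold $\mathcal{M}=\{\theta: S(\theta)=2/\eta\}$. By (A1), $S$ is $C^1$ near $\theta^*$ since the top eigenvalue is simple, so $\nabla S$ is well defined. If $\nabla S(\theta^*)\neq 0$, which is implicit in $\beta>0$, then $\mathcal{M}$ is locally a $C^1$ hypersurface with normal $\nabla S/\|\nabla S\|$.

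The tangent projection of $-\nabla L$ is
\[
-\nabla L + \frac{\langle \nabla L,\nabla S\rangle}{\|\nabla S\|^2}\nabla S
= -\nabla L - \frac{\alpha}{\beta}\nabla S,
\]
using the definitions $\alpha=-\langle\nabla L,\nabla S\rangle$ and $\beta=\|\nabla S\|^2$. As a check, we confirm that $\frac{d}{d\tau}S(\theta_{\text{EoS}})=\langle\nabla S,\dot\theta\rangle=\alpha-\alpha=0$, so the drift preserves the active constraint.

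Time here is $\tau=\eta(t-t^*)$, so one GD step corresponds to $\Delta\tau=\eta$. The GD increment $-\eta\nabla L$ therefore becomes the velocity $-\nabla L$. I would also remark, citing \citep{damian2023selfstabilization}, how this arises from the cubic model. The oscillation of amplitude $\delta$ along $\vone$ contributes the averaged term $-\tfrac{\eta}{2}\delta^2\nabla S\cdot\tfrac{1}{\eta}$, and $\delta^2=2\alpha/\beta$ turns this into exactly $-(\alpha/\beta)\nabla S$. This reconciles the projection view with the definition of $\delta$.

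\emph{Step 2 (exit branch).} Under (A3), $c\eta S(\theta^*)<2$, so every Hessian eigendirection has multiplier $|1-c\eta\lambda_i|<1$ near $\theta^*$. By continuity (A1), no period-two oscillation is sustained and the trajectory tracks gradient flow. Each step moves $-c\eta\nabla L$ while advancing $\tau$ by $\eta$, which gives $\dot\theta_{\text{exit}}=-c\nabla L$ to leading order. The $O(\eta)$ discretization corrections are absorbed into the (A4) remainder.

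\emph{Main obstacle.} The delicate point is not the algebra but what "cycle-averaged drift at $\theta^*$" means at the fork itself. At onset the oscillation amplitude has not yet reached its self-stabilized value $\delta$. I would therefore state the result as the drift of the constrained flow that (A2) posits, valid once the trajectory is on $\mathcal{M}$.

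I would also make explicit the nondegeneracy $\beta>0$. In addition, I would note that $\alpha>0$ (progressive sharpening) is what makes the constraint active, as in Damian et al. If $\alpha\le 0$, the projection is inactive and the drift reduces to $-\nabla L$.
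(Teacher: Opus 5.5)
Your proposal is correct and follows the paper's route: project $-\nabla L$ onto the tangent space $\{z:\langle\nabla S,z\rangle=0\}$ of $\mathcal{M}$ to get the extra $-(\alpha/\beta)\nabla S$ term, and for the exit branch divide the step $-c\eta\nabla L$ by $\Delta\tau=\eta$. Your tangency check, the cubic-model reconciliation via $\delta^2=2\alpha/\beta$, and the caveats on $\beta>0$, $\alpha>0$ and the onset amplitude are sound additions the paper omits; one small correction is that (A3) gives $1-c\eta\lambda_i>-1$ for every $i$ (no sign-flipping mode), not $|1-c\eta\lambda_i|<1$, which fails for $\lambda_i\le 0$ --- but the former is all you need to rule out a sustained period-two oscillation.
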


\begin{proof}
The tangent space of the active boundary $\mathcal{M} := \{\theta : S(\theta) = 2/\eta\}$ at $\theta^*$ is $\{z : \langle \nabla S, z\rangle = 0\}$. Removing the normal component of $\nabla L$ along $\nabla S$ gives
\[
    \nabla L|_{\mathcal{M}} = \nabla L - \frac{\langle \nabla L, \nabla S\rangle}{\|\nabla S\|^2} \nabla S = \nabla L + \frac{\alpha}{\beta} \nabla S,
\]
so projected-gradient descent yields drift $-\nabla L - (\alpha/\beta)\nabla S$. The exit-branch drift in baseline time follows from (A3): a step of size $c\eta$ in baseline time $\tau = \eta(n - n^*)$ has slope $-c\nabla L$ to leading order.
\end{proof}

The key observation is that the EoS branch is not ordinary gradient descent with oscillations: its slow drift carries an additional component along $-\nabla S$. Each subset feels this additional drift through the inner product $\langle \nabla\ell_k, \nabla S\rangle$.

\subsection{Branch decomposition}
\label{app:main-decomposition}

Define the two local subset scores
\[
    R_k := \langle \nabla\ell_k, \nabla L\rangle, \qquad Q_k := \langle \nabla\ell_k, \nabla S\rangle.
\]
$R_k$ is the ordinary loss-gradient alignment of subset $k$. $Q_k$ is the projection of the subset gradient onto the sharpness-control direction.

\begin{proposition}[Per-subset branch decomposition]
\label{prop:decomposition}
Under (A1)--(A5), for short post-branch times,
\[
    \bar\Delta\ell_k(\tau)
    \;=\;
    \underbrace{(1-c)\, R_k\, \tau}_{\text{learning-rate confounder}}
    \;+\;
    \underbrace{\frac{\alpha}{\beta}\, Q_k\, \tau}_{\text{EoS selector}}
    \;+\; O_k(\tau^2) + O_k(\delta^2).
\]
\end{proposition}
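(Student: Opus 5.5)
The plan is a first-order Taylor expansion of each subset loss along the two branch trajectories. Both branches start from the same point $\theta^*$ at $\tau = 0$, and their cycle-averaged drifts are given by Lemma~\ref{lem:eos-drift}. First I write each trajectory in baseline time $\tau$ as
\[
\bar\theta_{\text{EoS}}(\tau) = \theta^* + \tau\bigl(-\nabla L - \tfrac{\alpha}{\beta}\nabla S\bigr) + O(\tau^2),
\qquad
\theta_{\text{exit}}(\tau) = \theta^* - c\,\tau\,\nabla L + O(\tau^2),
\]
with all gradients evaluated at $\theta^*$. The $O(\tau^2)$ remainders are justified by (A4), since the drift vector fields are Lipschitz near $\theta^*$ by (A1). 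For the EoS branch, $\bar\theta$ is the cycle-averaged (slow) iterate. The actual iterate is $\bar\theta_{\text{EoS}}(\tau) + x(\tau)\,\mathbf{v}_1$, where the oscillation amplitude satisfies $|x| = O(\delta)$ and the sign alternates.

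Next I expand $\ell_k$ to first order around $\theta^*$, using $\ell_k \in C^2$:
\[
\ell_k(\theta^* + h) = \ell_k(\theta^*) + \langle \nabla\ell_k, h\rangle + O(\|h\|^2).
\]
On the exit branch this gives $\ell_k(\theta^*) - c\,\tau R_k + O_k(\tau^2)$.

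On the EoS branch I take $h = \bar\theta_{\text{EoS}}(\tau) - \theta^* + x\mathbf{v}_1$. The oscillating linear term $x\langle\nabla\ell_k,\mathbf{v}_1\rangle$ changes sign over a period-two cycle. The two-step average in (A5) therefore cancels it up to terms of size $O(\delta\cdot\tau)$ and $O(\delta^2)$. The quadratic term in $x$ survives averaging but is $O_k(\delta^2)$. This leaves
\[
\bar\ell_k(\theta_{\text{EoS}}(\tau)) = \ell_k(\theta^*) - \tau R_k - \tfrac{\alpha}{\beta}\tau Q_k + O_k(\tau^2) + O_k(\delta^2).
\]
Subtracting the EoS expression from the exit expression gives $(1-c)R_k\tau + \tfrac{\alpha}{\beta}Q_k\tau$ plus the stated remainders. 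Here I use the same averaging convention for the exit branch: it is stable by (A3), so averaging changes it only at $O(\tau^2)$.

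The main obstacle is controlling the cross terms between the fast oscillation and the slow drift. These are of the form $\langle\nabla^2\ell_k\, x\mathbf{v}_1, \tau d\rangle$, together with the $O(\tau)$ change in the phase amplitude $x$ over a cycle. I would handle them as follows:
\begin{itemize}
\item bound the mixed term by $O(\delta\tau)$;
\item absorb it into $O_k(\tau^2)+O_k(\delta^2)$ via $\delta\tau \le \tfrac12(\delta^2+\tau^2)$;
\item justify that the drift in Lemma~\ref{lem:eos-drift} already incorporates the $\delta^2$-order feedback through $\nabla S$.
\end{itemize}
The last point is exactly how the $-\tfrac{\alpha}{\beta}\nabla S$ term arises in the cubic model of \citep{damian2023selfstabilization}, so (A2) licenses it. The derivation therefore depends on treating (A2) as giving the averaged trajectory up to $O(\delta^2)$ error. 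Beyond this, the argument requires no new estimates.
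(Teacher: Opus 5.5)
Your proposal is correct at the paper's own heuristic level of rigor, and it follows essentially the paper's route: a first-order expansion about $\theta^*$ along the two drifts of Lemma~\ref{lem:eos-drift}, with the two-step average removing the $O(\delta)$ phase term along $\mathbf{v}_1$. The paper differentiates $\bar\ell_k$ along each branch, freezes $R_k$, $Q_k$, $\alpha/\beta$ at $\theta^*$, and integrates from $\bar\Delta\ell_k(0)=0$, so its oscillation residual is really $O_k(\delta^2\tau)$ before being absorbed; your explicit slow--fast split $\bar\theta_{\text{EoS}}+x\mathbf{v}_1$, with AM--GM absorption of the $O(\delta\tau)$ cross terms, makes that bookkeeping more transparent and gives the stated $O_k(\delta^2)$ residual directly.
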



\begin{proof}
All quantities below are evaluated locally near the branch point $\theta^*$ unless otherwise stated. Recall that
\[
    R_k := \langle \nabla \ell_k(\theta^*), \nabla L(\theta^*)\rangle,
    \qquad
    Q_k := \langle \nabla \ell_k(\theta^*), \nabla S(\theta^*)\rangle .
\]
By Lemma~\ref{lem:eos-drift}, the cycle-averaged EoS branch has slow drift
\[
    \dot\theta_{\mathrm{EoS}}
    =
    -\nabla L - \frac{\alpha}{\beta}\nabla S,
\]
while the exit branch, measured in baseline time $\tau$, has leading drift
\[
    \dot\theta_{\mathrm{exit}}
    =
    -c\nabla L.
\]

We first compute the rate of change of the subset loss along the EoS branch. By the chain rule,
\[
    \frac{d}{d\tau}
    \bar\ell_k(\theta_{\mathrm{EoS}}(\tau))
    =
    \left\langle
        \nabla \ell_k(\theta_{\mathrm{EoS}}(\tau)),
        \dot\theta_{\mathrm{EoS}}(\tau)
    \right\rangle .
\]
Using the local EoS drift from Lemma~\ref{lem:eos-drift}, this becomes
\[
    \frac{d}{d\tau}
    \bar\ell_k(\theta_{\mathrm{EoS}}(\tau))
    =
    \left\langle
        \nabla \ell_k(\theta_{\mathrm{EoS}}(\tau)),
        -\nabla L(\theta_{\mathrm{EoS}}(\tau))
        -
        \frac{\alpha(\tau)}{\beta(\tau)}
        \nabla S(\theta_{\mathrm{EoS}}(\tau))
    \right\rangle .
\]
Expanding the inner product gives
\[
    \frac{d}{d\tau}
    \bar\ell_k(\theta_{\mathrm{EoS}}(\tau))
    =
    -
    \left\langle
        \nabla \ell_k(\theta_{\mathrm{EoS}}(\tau)),
        \nabla L(\theta_{\mathrm{EoS}}(\tau))
    \right\rangle
    -
    \frac{\alpha(\tau)}{\beta(\tau)}
    \left\langle
        \nabla \ell_k(\theta_{\mathrm{EoS}}(\tau)),
        \nabla S(\theta_{\mathrm{EoS}}(\tau))
    \right\rangle .
\]
For short post-branch times, Assumption~(A4) allows us to replace the quantities along the branch by their values at $\theta^*$ up to first-order local errors:
\[
    \left\langle
        \nabla \ell_k(\theta_{\mathrm{EoS}}(\tau)),
        \nabla L(\theta_{\mathrm{EoS}}(\tau))
    \right\rangle
    =
    R_k + O_k(\tau),
\]
and
\[
    \left\langle
        \nabla \ell_k(\theta_{\mathrm{EoS}}(\tau)),
        \nabla S(\theta_{\mathrm{EoS}}(\tau))
    \right\rangle
    =
    Q_k + O_k(\tau).
\]
Likewise, $\alpha(\tau)/\beta(\tau)=\alpha/\beta+O(\tau)$ locally. Therefore
\[
    \frac{d}{d\tau}
    \bar\ell_k(\theta_{\mathrm{EoS}}(\tau))
    =
    -R_k
    -
    \frac{\alpha}{\beta}Q_k
    +
    O_k(\tau)
    +
    O_k(\delta^2).
\]
The term $O_k(\delta^2)$ comes from replacing the instantaneous oscillatory EoS trajectory by its two-step cycle average. The leading $O(\delta)$ phase term cancels under the two-step average, leaving only second-order oscillation effects.

Now consider the exit branch. Again by the chain rule,
\[
    \frac{d}{d\tau}
    \bar\ell_k(\theta_{\mathrm{exit}}(\tau))
    =
    \left\langle
        \nabla \ell_k(\theta_{\mathrm{exit}}(\tau)),
        \dot\theta_{\mathrm{exit}}(\tau)
    \right\rangle .
\]
Using $\dot\theta_{\mathrm{exit}}=-c\nabla L$ gives
\[
    \frac{d}{d\tau}
    \bar\ell_k(\theta_{\mathrm{exit}}(\tau))
    =
    -c
    \left\langle
        \nabla \ell_k(\theta_{\mathrm{exit}}(\tau)),
        \nabla L(\theta_{\mathrm{exit}}(\tau))
    \right\rangle .
\]
Again Taylor-expanding around $\theta^*$ over the short window,
\[
    \left\langle
        \nabla \ell_k(\theta_{\mathrm{exit}}(\tau)),
        \nabla L(\theta_{\mathrm{exit}}(\tau))
    \right\rangle
    =
    R_k + O_k(\tau),
\]
so
\[
    \frac{d}{d\tau}
    \bar\ell_k(\theta_{\mathrm{exit}}(\tau))
    =
    -cR_k + O_k(\tau).
\]

We now differentiate the branch differential
\[
    \bar\Delta\ell_k(\tau)
    :=
    \bar\ell_k(\theta_{\mathrm{exit}}(\tau))
    -
    \bar\ell_k(\theta_{\mathrm{EoS}}(\tau)).
\]
Taking a derivative with respect to $\tau$ yields
\[
    \frac{d}{d\tau}\bar\Delta\ell_k(\tau)
    =
    \frac{d}{d\tau}
    \bar\ell_k(\theta_{\mathrm{exit}}(\tau))
    -
    \frac{d}{d\tau}
    \bar\ell_k(\theta_{\mathrm{EoS}}(\tau)).
\]
Substituting the two expressions above,
\[
    \frac{d}{d\tau}\bar\Delta\ell_k(\tau)
    =
    \left[-cR_k + O_k(\tau)\right]
    -
    \left[
        -R_k
        -
        \frac{\alpha}{\beta}Q_k
        +
        O_k(\tau)
        +
        O_k(\delta^2)
    \right].
\]
Simplifying,
\[
    \frac{d}{d\tau}\bar\Delta\ell_k(\tau)
    =
    (1-c)R_k
    +
    \frac{\alpha}{\beta}Q_k
    +
    O_k(\tau)
    +
    O_k(\delta^2).
\]

At the branching time, both branches start from the same parameter value, so
\[
    \bar\Delta\ell_k(0)=0.
\]
Integrating from $0$ to $\tau$ gives
\[
    \bar\Delta\ell_k(\tau)
    =
    \int_0^\tau
    \frac{d}{ds}\bar\Delta\ell_k(s)\,ds.
\]
Using the expression for the derivative,
\[
    \bar\Delta\ell_k(\tau)
    =
    \int_0^\tau
    \left[
        (1-c)R_k
        +
        \frac{\alpha}{\beta}Q_k
        +
        O_k(s)
        +
        O_k(\delta^2)
    \right] ds.
\]
The leading terms are constant in this local expansion, so
\[
    \int_0^\tau
    \left[
        (1-c)R_k
        +
        \frac{\alpha}{\beta}Q_k
    \right] ds
    =
    \left[
        (1-c)R_k
        +
        \frac{\alpha}{\beta}Q_k
    \right]\tau.
\]
The local Taylor error integrates as
\[
    \int_0^\tau O_k(s)\,ds = O_k(\tau^2),
\]
and the cycle-averaging residual contributes
\[
    \int_0^\tau O_k(\delta^2)\,ds = O_k(\delta^2\tau).
\]
For short fixed post-branch windows, we absorb this into the stated residual notation as $O_k(\delta^2)$. Therefore
\[
    \bar\Delta\ell_k(\tau)
    =
    (1-c)R_k\tau
    +
    \frac{\alpha}{\beta}Q_k\tau
    +
    O_k(\tau^2)
    +
    O_k(\delta^2).
\]
This proves the decomposition.
\end{proof}

Proposition~\ref{prop:decomposition} isolates two distinct mechanisms in the branching intervention. The first is an ordinary learning-rate confounder: lowering $\eta$ by factor $c$ slows progress on every subset by $(1-c)R_k\tau$. The second is the EoS-specific selector: the baseline branch carries an additional $-(\alpha/\beta)\nabla S$ drift, contributing $(\alpha/\beta)Q_k\tau$ to the differential. Whether subset $k$ benefits from EoS depends on the sign of $Q_k$ relative to the rate confounder.

\begin{corollary}[Rate slowdown alone cannot produce mixed signs]
\label{cor:mixed-signs}
If $R_k \geq 0$ for all prototype groups $k$, then the learning-rate confounder $(1-c)R_k\tau$ is nonnegative for all $k$. A mixed-sign pattern across groups in $\bar\Delta\ell_k$ therefore cannot be explained by ordinary learning-rate slowdown alone.
\end{corollary}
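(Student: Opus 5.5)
The plan is to read the corollary directly off Proposition~\ref{prop:decomposition}. First I establish the sign claim. The confounder term is $(1-c)R_k\tau$. Here $c\in(0,1)$ by the definition of the exit branch, so $1-c>0$. The baseline time satisfies $\tau=\eta(t-t^*)\ge 0$ for post-branch times. Hence $R_k\ge 0$ gives $(1-c)R_k\tau\ge 0$ for every group $k$. This step is immediate.

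Next I make precise what a mixed-sign pattern excludes. Suppose the differential were driven by the rate slowdown alone. That means the EoS selector $(\alpha/\beta)Q_k\tau$ is absent, as it would be if the baseline branch were ordinary gradient descent at rate $\eta$ rather than projected drift. Then Proposition~\ref{prop:decomposition} gives
\[
\bar\Delta\ell_k(\tau)=(1-c)R_k\tau+O_k(\tau^2)+O_k(\delta^2).
\]
Choose $\tau$ small enough, and take $\delta$ small relative to $\tau$, so that the remainders are dominated by the leading term whenever $R_k>0$. Then every group with $R_k>0$ has $\bar\Delta\ell_k(\tau)>0$. A group with $R_k=0$ has only remainder-order differential and so shows no leading-order sign. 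Consequently, no pair of groups can exhibit strictly opposite leading-order signs. Contrapositively, an observed pattern with $\bar\Delta\ell_j<0<\bar\Delta\ell_k$ at leading order requires a nonzero contribution from the selector term. In particular, it requires some group with $(\alpha/\beta)Q_j<-(1-c)R_j\le 0$, that is $Q_j<0$, since $\alpha,\beta>0$ under progressive sharpening.

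The main obstacle is the handling of the remainders. Proposition~\ref{prop:decomposition} only controls $O_k(\tau^2)+O_k(\delta^2)$. A strict-sign conclusion therefore needs $|R_k|$ bounded away from zero relative to these constants in the short window. It also needs the $O_k(\delta^2)$ term, which does not vanish as $\tau\to 0$, to be small. I would state this explicitly as a leading-order claim. As an alternative, I would phrase the conclusion in terms of the derivative $\frac{d}{d\tau}\bar\Delta\ell_k$ at $\tau=0^+$, which the proof of Proposition~\ref{prop:decomposition} gives as $(1-c)R_k+(\alpha/\beta)Q_k$ up to $O_k(\delta^2)$. That form avoids the $\tau^2$ issue entirely, leaving only the cycle-averaging residual, which is assumed negligible under (A5).
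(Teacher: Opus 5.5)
Your sign argument ($1-c>0$, $\tau\ge 0$, $R_k\ge 0$) is exactly the paper's proof, which stops there and leaves the ``mixed-sign'' sentence as an interpretive reading of Proposition~\ref{prop:decomposition}. Your extra leading-order contrapositive goes beyond the paper and is sound as far as excluding a rate-only explanation. However, $\delta^2 = 2\alpha/\beta$, so the $O_k(\delta^2)$ residual is formally of the same order as the selector coefficient $\alpha/\beta$. The sharper conclusion $Q_j<0$ therefore needs that residual's constant to be genuinely small, not merely ``assumed negligible under (A5)''.
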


\begin{proof}
$0 < c < 1$ implies $1 - c > 0$. With $\tau \geq 0$, the rate term has the sign of $R_k$. Nonnegativity of all $R_k$ then forces nonnegative rate contributions for all subsets.
\end{proof}

This is the theoretical basis for interpreting the selective trade-off in Figure~4. Mixed-sign $\bar\Delta\ell_k$ across groups requires either the EoS selector $Q_k$ or another subset-dependent mechanism, not rate slowdown alone. In our empirical comparisons, we instead align branches by learning-rate-normalized time. This removes the leading-order speed difference along $-\nabla L$, so that the remaining first-order branch differential is governed by the EoS-specific selector $(\alpha/\beta)Q_k$. Higher-order differences due to the changed trajectory and sharpness threshold are absorbed into the residual terms.

\subsection{Curvature influence as a single-mode proxy}
\label{app:single-mode}

The main text measures
\[
    C_k := (\nabla\ell_k \cdot v_1)^2.
\]
Proposition~\ref{prop:decomposition} states the exact local selector as $Q_k = \langle\nabla\ell_k, \nabla S\rangle$. Under standard eigenvalue perturbation, $\nabla S = \nabla^3 L[v_1, v_1]$, so $\nabla S$ has a component along $\vone$ of magnitude $\gamma := \nabla^3 L[v_1, v_1, v_1]$ plus an orthogonal residual. When $\nabla S$ is dominated by its top-mode component, $|Q_k|^2 \propto C_k$, and $C_k$ functions as a single-mode proxy for the squared selector magnitude. In what follows, $C_k$ is the measured statistic. The factorization
\[
    C_k = \|\nabla\ell_k\|^2 \cos^2\theta_k,
    \qquad
    \cos^2\theta_k := \frac{(\nabla\ell_k \cdot v_1)^2}{\|\nabla\ell_k\|^2},
\]
separates direction (alignment) from magnitude (persistence). The next two subsections establish that both factors are necessary.

\subsection{Alignment: coherent vs.\ random gradients}
\label{app:coherence-lemma}

The random-direction outlier ablation tests whether large geometric distance is sufficient for curvature dominance. The theory predicts that it is not: what matters is whether per-example gradients add coherently in a shared direction.

Let $g_k := \nabla\ell_k = m^{-1}\sum_{i=1}^m g_i$ be the group gradient, and let $u$ be a unit direction (interpreted as the current top Hessian eigenvector).

\begin{lemma}[Coherence amplifies curvature influence]
\label{lem:coherence}
Suppose first that per-example gradients share a coherent component:
\[
    g_i = a_i q + \varepsilon_i,
    \qquad
    m^{-1}\sum_i \langle \varepsilon_i, u\rangle \approx 0.
\]
Then $\langle g_k, u\rangle \approx \bar a\, \langle q, u\rangle$ with $\bar a := m^{-1}\sum_i a_i$.

Suppose instead that per-example directions are independent, mean-zero, and isotropic in an effective $d_{\text{eff}}$-dimensional subspace (i.e. incoherent directions):
\[
    g_i = a q_i,
    \qquad \mathbb{E}[q_i] = 0,
    \qquad \mathbb{E}\langle q_i, u\rangle^2 = 1/d_{\text{eff}}.
\]
Then
\[
    \mathbb{E}\!\left[\langle g_k, u\rangle^2\right] = \frac{a^2}{m\, d_{\text{eff}}}.
\]
\end{lemma}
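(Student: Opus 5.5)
The plan is to handle the two regimes of Lemma~\ref{lem:coherence} separately. Each is a direct computation from the definition $g_k = m^{-1}\sum_i g_i$ and the bilinearity of the inner product with the fixed unit vector $u$. Throughout, $u$ is treated as deterministic, i.e.\ not depending on the random directions $q_i$. This reading of ``the current top eigenvector'' must be stated explicitly, since otherwise $u$ could correlate with the $q_i$.

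\emph{Coherent case.} Substitute $g_i = a_i q + \varepsilon_i$ into $g_k$ and pair with $u$:
\[
\langle g_k, u\rangle = \Big(m^{-1}\sum_i a_i\Big)\langle q,u\rangle + m^{-1}\sum_i \langle \varepsilon_i, u\rangle .
\]
The first term is $\bar a\,\langle q,u\rangle$. The second is $\approx 0$ by hypothesis, which gives the claim. To make the ``$\approx$'' meaningful, I would state the error explicitly as $|\langle g_k,u\rangle - \bar a\langle q,u\rangle| = |m^{-1}\sum_i\langle\varepsilon_i,u\rangle|$.

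\emph{Incoherent case.} Write $\langle g_k,u\rangle = (a/m)\sum_i X_i$ with $X_i := \langle q_i,u\rangle$. Squaring and taking expectations gives
\[
\mathbb{E}\langle g_k,u\rangle^2 = \frac{a^2}{m^2}\Big(\sum_i \mathbb{E}X_i^2 + \sum_{i\neq j}\mathbb{E}[X_iX_j]\Big).
\]
For $i\neq j$, independence and $\mathbb{E}[q_i]=0$ give $\mathbb{E}[X_iX_j] = \langle \mathbb{E}q_i,u\rangle\langle\mathbb{E}q_j,u\rangle = 0$, where linearity lets the expectation pass inside the inner product. Each diagonal term equals $1/d_{\text{eff}}$ by the isotropy assumption, so the sum is $m/d_{\text{eff}}$, yielding $a^2/(m\,d_{\text{eff}})$.

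The only delicate point is the independence of $u$ from the $q_i$ noted above. I would flag it as the main caveat rather than a real obstacle, and remark that if $u$ is itself chosen from the data, the computation holds conditionally on $u$ only when the $q_i$ are independent of it. I would close by contrasting the two results: the coherent case gives $C_k \approx \bar a^2\langle q,u\rangle^2$, with no $1/m$ or $1/d_{\text{eff}}$ suppression. This is the amplification the lemma's name refers to.
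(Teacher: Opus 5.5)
Your proposal is correct and matches the paper's proof: in the coherent case you substitute $g_i = a_i q + \varepsilon_i$ and discard the residual average by hypothesis, and in the incoherent case you expand the square, kill the cross terms using independence and $\mathbb{E}[q_i]=0$, and sum the $m$ diagonal terms of size $1/d_{\text{eff}}$. Your explicit caveat that $u$ must be fixed, i.e.\ independent of the $q_i$, is a sound clarification of an assumption the paper leaves implicit.
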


\begin{proof}
In the coherent case,
\[
    \langle g_k, u\rangle = \bar a \langle q, u\rangle + m^{-1}\sum_i \langle \varepsilon_i, u\rangle,
\]
and the residual average is negligible by assumption. In the random case, the terms $\langle q_i, u\rangle$ are independent with mean zero, so
\[
    \mathbb{E}\!\left[\langle g_k, u\rangle^2\right]
    = \frac{a^2}{m^2}\sum_i \mathbb{E}\langle q_i, u\rangle^2
    = \frac{a^2}{m\, d_{\text{eff}}}.
\]
\end{proof}

Lemma~\ref{lem:coherence} justifies the coherent-vs-random ablation in Section~\ref{sec:directional_alignment} as a test of the alignment mechanism. If \eos{} selectivity is driven by directional alignment rather than distance or gradient magnitude alone, then preserving displacement size while randomizing directions should collapse the group-level projection onto $\vone$ and eliminate the \eos{} advantage. In the coherent case, the projection remains order $\bar a\langle q,\vone\rangle$, and hence $C_k$ remains order $\bar a^2\langle q,\vone\rangle^2$. In the random-direction case, the expected squared projection is only $a^2/(m d_{\mathrm{eff}})$, so curvature influence averages away with group size and effective dimension. A norm-only account predicts no difference between the two conditions; the observed collapse of the \eos{} advantage under random-direction displacement is therefore evidence for alignment as the operative mechanism, not magnitude alone.

\subsection{Persistence: gradient saturation removes curvature influence}
\label{app:saturation-lemma}

The MSE-vs-CE comparison tests whether alignment is sufficient when gradient magnitude collapses. Write $\ell_i(\theta) = \phi(f_\theta(x_i), y_i)$ and decompose
\[
    \nabla_\theta \ell_i = J_i^\top r_i,
    \qquad
    J_i := \nabla_\theta f_\theta(x_i),
    \qquad
    r_i := \nabla_f \phi(f_\theta(x_i), y_i).
\]

\begin{lemma}[Saturation removes curvature influence]
\label{lem:saturation}
Suppose $\|r_i\| \to 0$ for all $i \in P_k$ and the Jacobians are uniformly bounded, $\|J_i\|_{\text{op}} \leq B$. Then $\|\nabla\ell_k\| \to 0$, and consequently
\[
    (\nabla\ell_k \cdot u)^2 \to 0
\]
for every unit vector $u$, regardless of the alignment $\cos^2\theta_k$.
\end{lemma}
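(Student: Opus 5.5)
The plan is a direct norm bound, followed by Cauchy--Schwarz. The chain-rule decomposition $\nabla_\theta \ell_i = J_i^\top r_i$ reduces everything to controlling the residuals $r_i$. The Jacobians enter only through the uniform operator-norm bound $B$.

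\emph{Step 1: per-example gradients vanish.} For each $i \in P_k$, submultiplicativity of the operator norm and $\|J_i^\top\|_{\text{op}} = \|J_i\|_{\text{op}}$ give
\[
    \|\nabla_\theta \ell_i\| = \|J_i^\top r_i\| \le \|J_i\|_{\text{op}}\,\|r_i\| \le B\,\|r_i\|.
\]
Since $\|r_i\| \to 0$ along the trajectory (or along whatever limit is meant, e.g.\ $\tau \to \infty$ under CE as margins grow) and $B$ does not depend on that limit, we get $\|\nabla_\theta \ell_i\| \to 0$ for every $i$.

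\emph{Step 2: the group gradient vanishes.} Because $\nabla \ell_k = |P_k|^{-1}\sum_{i\in P_k} \nabla_\theta \ell_i$ is an average over a fixed, finite index set, the triangle inequality gives
\[
    \|\nabla\ell_k\| \le \frac{1}{|P_k|}\sum_{i\in P_k} B\|r_i\| \le B \max_{i\in P_k}\|r_i\| \to 0.
\]
The maximum over a finite set of quantities tending to zero itself tends to zero.

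\emph{Step 3: the projection vanishes for every direction.} For any unit $u$, Cauchy--Schwarz gives $(\nabla\ell_k\cdot u)^2 \le \|\nabla\ell_k\|^2\|u\|^2 = \|\nabla\ell_k\|^2 \to 0$. The bound is uniform in $u$, so it applies equally when $u = \vone(\theta)$ moves with the trajectory. Equivalently, $C_k = \|\nabla\ell_k\|^2\cos^2\theta_k \le \|\nabla\ell_k\|^2$ because $\cos^2\theta_k \in [0,1]$. This makes explicit that the alignment factor cannot compensate: even with $\cos^2\theta_k = 1$, the curvature influence is squeezed to zero.

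The only real subtlety, and where I would be careful, is the uniformity of $B$. The bound must hold along the whole sequence of parameters, not just at a single point. Otherwise growing Jacobians under CE, where weight norms can increase as margins grow, could offset the shrinking residuals. The lemma's hypothesis supplies this uniformity, so I would state explicitly that $B$ is independent of time/parameter and that $P_k$ is finite. With those two facts fixed, the rest is routine.
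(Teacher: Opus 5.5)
Your proposal is correct and matches the paper's proof essentially step for step: the per-example bound $\|\nabla_\theta \ell_i\| \le \|J_i\|_{\text{op}}\|r_i\| \le B\|r_i\|$, the triangle inequality over the finite average defining $\nabla\ell_k$, and then $|\langle \nabla\ell_k, u\rangle| \le \|\nabla\ell_k\|$. Your explicit remark that $B$ must hold uniformly along the trajectory and that $P_k$ is finite spells out something the paper leaves implicit in the hypothesis.
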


\begin{proof}
Per example, $\|\nabla_\theta \ell_i\| \leq \|J_i\|_{\text{op}} \|r_i\| \leq B \|r_i\|$. Hence
\[
    \|\nabla\ell_k\| \leq |P_k|^{-1}\sum_{i \in P_k} \|\nabla_\theta \ell_i\| \to 0,
\]
and $|\langle \nabla\ell_k, u\rangle| \leq \|\nabla\ell_k\|$ then gives the second claim.
\end{proof}

For softmax cross-entropy, $r_i = p_i - y_i$, so confidently correctly classified examples ($p_i \to y_i$) drive $r_i \to 0$ and the corresponding subset gradient norm collapses. Output-outliers, whose assigned labels are inconsistent with their input, retain $\|p_i - y_i\|$ bounded away from zero as long as the model continues to predict the input-consistent class, so their gradients persist. Lemma~\ref{lem:saturation} thus predicts that under CE, a confidently classified subset can retain high $\cos^2\theta_k$ while losing all curvature influence, and that the EoS advantage transfers to whichever subset retains non-vanishing gradients---empirically, output-outliers.

The MSE comparison sharpens the test. MSE residuals also vanish on perfectly fit examples, but in the observed regime the coherent input-outliers retain large residuals throughout training, preserving gradient magnitude. The MSE-vs-CE contrast therefore isolates persistence: a subset that remains aligned with $\vone$ but loses gradient norm loses EoS influence, while a subset that retains both keeps it.

\subsection{Scope of the results}
\label{app:scope}

The results above are local and conditional. Under the Damian--Nichani--Lee self-stabilization approximation (A2), Proposition~\ref{prop:decomposition} establishes that the cycle-averaged branch differential decomposes additively into a learning-rate confounder controlled by $R_k$ and an EoS-specific selector controlled by $Q_k$. Lemmas~\ref{lem:coherence} and~\ref{lem:saturation} explain why directional coherence and gradient persistence are each individually necessary for a subset to feel the selector. The results do not claim that the decomposition holds globally, that the single-mode proxy $C_k$ exactly equals $|Q_k|^2$, or that flatness has a universal functional meaning. Empirically, $C_k$ tracks $Q_k^2$ across (subgroup, checkpoint) pairs in the \eos{} regime up to a roughly constant proportionality (Figure ~\ref{fig:scatter}), validating its use as a single-mode proxy. The functional consequence of \eos{} depends on which subset dominates the selector at training time---a point developed empirically in Section~\ref{sec:mechanism}.

\begin{figure} [h]
    \centering
    \includegraphics[width=0.5\linewidth]{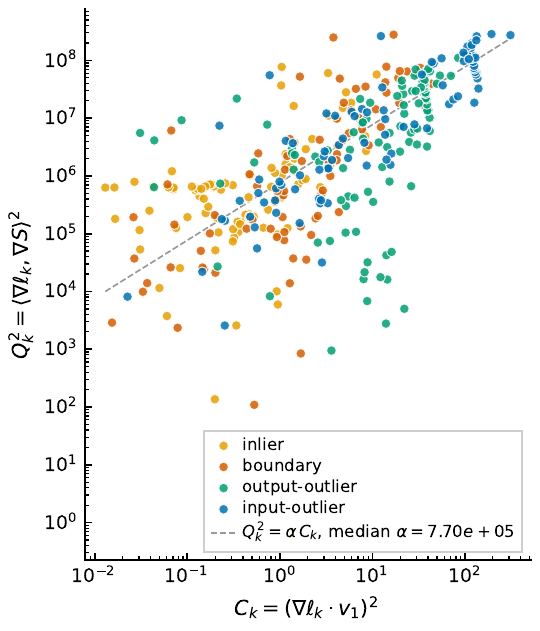}
    \caption{
        \textbf{Empirical validation of the single-mode proxy.}
        Scatter of $Q_k^2 = \langle \nabla \ell_k, \nabla S \rangle^2$ versus $C_k = (\nabla \ell_k \cdot v_1)^2$ across (subgroup, checkpoint) pairs in training. The dashed line shows the median proportionality $Q_k^2 = \alpha\, C_k$, $\alpha = 7.7 \times 10^5$. The relationship holds across the trajectory; for inliers (nearly orthogonal to $v_1$), $C_k$ is small and the proxy is loosest, in the regime where the selector predicts no \eos{} advantage.
    }
    \label{fig:scatter}
\end{figure}



\end{document}